\newcommand{\x}{\mathbf{x}}
\newcommand{\W}{\mathbf{W}}
\newcommand{\U}{\mathbf{U}}
\newcommand{\SP}{\mathbf{S}}
\newcommand{\Y}{\mathbf{Y}}
\newcommand{\indy}[1]{\mathbb{I}\left\{#1\right\}}
\newcommand{\uu}{\mathbf{u}}
\newcommand{\vv}{\mathbf{v}}
\newcommand{\y}{\mathbf{y}}
\newcommand{\s}{\mathbf{s}}
\newcommand{\paren}[1]{\left(#1\right)}
\newcommand{\cbrace}[1]{\left\{#1\right\}}
\newcommand{\norm}[1]{\left\|#1\right\|}
\def\RR{{\mathbb R}}
\def\PP{{\mathbb P}}
\def\Figref#1{Figure~\ref{#1}}
\def\eqref#1{equation~\ref{#1}}
\def\Eqref#1{Equation~\ref{#1}}
\def\1{\bm{1}}
\DeclareMathAlphabet{\mathsfit}{\encodingdefault}{\sfdefault}{m}{sl}
\SetMathAlphabet{\mathsfit}{bold}{\encodingdefault}{\sfdefault}{bx}{n}
\newcommand{\E}{\mathbb{E}}
\newcommand{\R}{\mathbb{R}}
\definecolor{myblue2}{HTML}{4682B4}
\newtheorem{defin}{Definition}
\newtheorem{theorem}{Theorem}
\newtheorem{lemma}{Lemma}
\newtheorem{asump}{Assumption}
\newcommand{\mycite}[1]{\textcolor{myblue2}{\textit{\citep{#1}}}}
\newcommand{\brac}[1]{\left[#1\right]}
\begin{document}

% If your paper is accepted and the title of your paper is very long,
% the style will print as headings an error message. Use the following
% command to supply a shorter title of your paper so that it can be
% used as headings.
%
%\runningtitle{I use this title instead because the last one was very long}

% If your paper is accepted and the number of authors is large, the
% style will print as headings an error message. Use the following
% command to supply a shorter version of the authors names so that
% they can be used as headings (for example, use only the surnames)
%
%\runningauthor{Surname 1, Surname 2, Surname 3, ...., Surname n}

\twocolumn[

\aistatstitle{Strong Lottery Ticket Hypothesis with $\varepsilon$--perturbation}

\aistatsauthor{ Zheyang Xiong$^\star$ \And Fangshuo Liao$^\star$ \And  Anastasios Kyrillidis }

\aistatsaddress{ Rice University \And  Rice University \And Rice University } ]

\begin{abstract}
The strong Lottery Ticket Hypothesis (LTH) \mycite{ramanujan2019What, Zhou2019Zeros} claims the existence of a subnetwork in a sufficiently large, randomly initialized neural network that approximates some target neural network without the need of training.
We extend the theoretical guarantee of the strong LTH literature to a scenario more similar to the original LTH, by generalizing the weight change in the pre-training step to some perturbation around initialization. 
In particular, we focus on the following open questions: \textit{By allowing an $\varepsilon$-scale perturbation on the random initial weights, can we reduce the over-parameterization requirement for the candidate network in the strong LTH? Furthermore, does the weight change by SGD coincide with a good set of such perturbation?}

We answer the first question by first extending the theoretical result on subset sum \cite{LuekerExponentially} to allow perturbation on the candidates. Applying this result to the neural network setting, we show that such $\varepsilon$-perturbation reduces the over-parameterization requirement of the strong LTH. To answer the second question, we show via experiments that the perturbed weight achieved by the projected SGD shows better performance under the strong LTH pruning.
\end{abstract}

\section{Introduction}
Pruning techniques for over-parameterized neural networks have drawn growing attention in recent years \mycite{Han2015Learning, Li2016Pruning, Wen2016Learning, He2017Channel, Zhu2017ToPrune, Blalock2020What, Wang2019Pruning, Lee2018SNIP, Wang2020Picking}.
Amongst them, the \textit{Lottery Ticket Hypothesis} (LTH) \mycite{frankle2018the,Frankle2019Stabilizing} claims the existence of a small (sparse) subnetwork within a large (dense) neural network such that, when trained in isolation, achieves comparable or even better performance than the original dense network. 
Such subnetworks can be identified by pre-training the dense network and pruning it based on the magnitude of the learned weights \mycite{frankle2018the}. 
Currently, to the best of our knowledge, the LTH lacks of any rigorous theoretical guarantees that justify superior performance of the subnetwork, especially under the pretraining-based pruning; yet, it has been proven to be effective in practice.

The \textit{Strong Lottery Ticket Hypothesis} \mycite{ramanujan2019What, Zhou2019Zeros} leverages a different pruning scheme: given a target dense neural network, and a randomly initialized, \textit{sufficiently over-parameterized} candidate network, there exists a subnetwork in the latter that approximates the former arbitrarily well without the need of training. 
While we usually require a significant over-parameterization in the randomly initialized network, the strong LTH enjoys extensive theoretical guarantees \mycite{malach2020Proving, Pensia2020Opimal, Orseau2020Logarithmic}. 
Yet, the same theory hardly applies to the original LTH, as strong LTH assumes that the candidate weights pruned are fixed at initialization. 
\emph{The fact that LTH pruning is based on weights modified by pre-training motivates us to analyze the approximation behavior that emerges beyond the randomness in the candidate weights.}

Further study on the pre-training process of the LTH shows that the winnning lottery ticket emerges in the early stage of training \mycite{You2019Drawing}, when the loss have not converge to a desirably small value. 
This implies that converging to small training loss is not necessarily the intent of the pre-training steo in the LTH procedure; \textit{in other words, achieving small loss does not necessarily explain how and why pre-training helps pruning in LTH.}
Instead, one could hypothesize that the pre-training --based on loss minimization-- \textit{guide} the weight perturbation to a direction that facilitate the pruning process. 

Based on this hypothesis, our work extends the theoretical guarantee of strong LTH to a scenario more similar to the original LTH, by generalizing the weight change achieved in the pre-training step to some perturbation around the initialization. Our central question is as follows:

\begin{minipage}[t]{0.99\linewidth} 
\begin{tcolorbox}[colback=gray!5,colframe=green!40!black] 
\vspace{-0.2cm}
\textit{``By allowing an $\varepsilon$-scale perturbation on the random initial weights, can we reduce the over-parameterization requirement for the candidate network in the strong LTH? Furthermore, does the weight change by SGD coincide with a good set of such perturbation?''} \vspace{-0.2cm}
\end{tcolorbox}
\end{minipage}

% \begin{center}
%     \textit{``By allowing an $\varepsilon$-scale perturbation on the random initial weights, can we reduce the over-parameterization requirement for the candidate network in the strong LTH?\\ Furthermore, does the weight change by SGD coincide with a good set of such perturbation?''}
% \end{center}
To be more specific, let $f_{\boldsymbol{\theta}}$ be a neural network with parameters $\boldsymbol{\theta}\in\R^d$. 
Formally, an $\varepsilon$-perturbation is a mapping $\mathcal{P}:\R^d\rightarrow\R^d$ such that the maximum entrywise perturbation is bounded in absolute value by $\varepsilon$, i.e., $\|\mathcal{P}(\boldsymbol{\theta}) -\boldsymbol{\theta}\|_\infty \leq \varepsilon$.
We point out that this definition of perturbation generalizes to two existing scenarios: when $\varepsilon = 0$ --i.e., we allow no weight perturbation-- the question above reduces to the original strong LTH. 
When $\varepsilon = \infty$ --i.e., we allow arbitrarily large weight perturbation-- the required over-parameterization for the candidate network is at most the size of the target network. 
In this case, we often use gradient based optimizers such as SGD to find such weight perturbation, but often without the need of pruning. 
Yet, both cases cover only one aspect in pruning and perturbation. 

In this paper, we study the inter-dependence of the two aspects above by treating $\varepsilon$ as a variable. 
In particular, we show that a larger perturbation scale $\varepsilon$, which corresponds to a larger amount of training, would alleviate the over-parameterization requirement, while keeping the accuracy of the pruned neural network the same.
Our contributions can be summarized as below: 
\begin{itemize}[leftmargin=*]
\vspace{-0.2cm}
    \item We consider a generalized version of the subset sum problem where each candidate in the summation is allowed perturbation bounded by a fixed scale $\varepsilon$. We extend the analysis of the subset sum \mycite{LuekerExponentially} to our generalized version, and show that when a larger perturbation is allowed, the required size of the candidate set can be reduced. We empirically validate our theoretical result on the perturbed subset sum problem.
    \vspace{-0.2cm}
    \item Applying the theoretical result above to neural networks, we prove that, when an $\varepsilon$-scale perturbation is allowed, the strong LTH on randomly initialized neural network requires less over-parameterization to achieve a specific approximation error. In particular, the over-parameterization decreases as $\varepsilon$ increases.
    \vspace{-0.2cm}
    \item On neural networks, we empirically show that $i)$ the perturbation that alleviates the overparameterization requirement of the strong LTH can be obtained by projected SGD on the initialized weights; and, $ii)$ under fixed overparameterization, neural networks with a larger freedom over the level of perturbation achieves a higher accuracy after pruning. This result establish the connection between the amount of pre-training and the accuracy of the pruned network.
\end{itemize}

\section{Related Works}

\textbf{Lotter Ticket Hypothesis.} The Lottery Ticket Hypothesis is first proposed by \mycite{frankle2018the}. 
Later works investigate this hypothesis under different scenarios and from different perspectives. 
To name a few, \mycite{Yu2019Playing} studies the LTH in the reinforcement learning setting. 
\mycite{Sabatelli2020Transferability} investigates the performance of lottery tickets in the setting of transfer learning. 
\mycite{You2019Drawing} find that the lottery tickets can be observed in the early phase of training. 
Moreover, a line of work \mycite{Lee2018SNIP, Wang2020Picking, Tanaka2020Pruning, Patil2020PHEW} propose pruning techniques without the need of pre-training. 
Noticeably, \mycite{Tanaka2020Pruning} and \mycite{Patil2020PHEW} even require no training data. 
For a comprehensive survey on the LTH, we refer the readers to \mycite{lange2020_lottery_ticket_hypothesis} and \mycite{cunha2022proving}. 

Several works attempt to explain the LTH. 
\mycite{Evci_Ioannou_Keskin_Dauphin_2022} empirically study the behavior of gradient flow in the pruned network. 
\mycite{Zhang2021Why} assumes that the optimal mask is given, and proves that the pruned network achieves faster convergence and better generalization when trained from initialization.
\mycite{Wolfe2021How} provides a theoretical guarantee for a pruning-after-training fashion. 
However, these works differ from ours in the following: 
$i)$ they usually consider neuron pruning on small neural network archictures (e.g., \mycite{Wolfe2021How} focuses on a two-layer MLP with smooth activations), while our work considers weight pruning of a deep ReLU neural network; 
$ii)$ they consider minimizing the loss on a specific dataset, and require an over-parameterization that scales quadratically with the number of samples, while we approximate a target network with a fixed architecture in terms of the function norm, and require an over-parameterization that scales with the width of the target network.

\textbf{Strong Lottery Ticket Hypothesis.} The strong LTH originates from the empirical observation that, by fixing the weights at initialization and learning the mask over the weights, one can identify subnetworks that achieve comparable accuracy to the dense one with learned weights \mycite{Zhou2019Zeros}. 
\mycite{ramanujan2019What} made this idea more concrete by proposing the \texttt{edge-popup} algorithm to efficiently learn the mask. 
\mycite{malach2020Proving} first proved such hypothesis under the assumption that the dense network's size scales polynomially with the target network's width and depth. 
Leveraging the advantage of weight decomposition and theoretical results on the subset sum problem \mycite{LuekerExponentially}, \mycite{Orseau2020Logarithmic} and \mycite{Pensia2020Opimal} improved the over-parameterization requirement to a logarithm factor times the size of the target network. 
Later work explores different variations of the strong LTH: \mycite{cunha2022proving} and \mycite{Burkholz2022Convolutional} show that the strong LTH holds in the case of convolutional neural networks. 
\mycite{burkholz2022on} extends the strong LTH proof to a universial family of functions. Finally, \mycite{chijiwa2021pruning} further reduces the over-parameterization requirement by employing the iterative randomization.

\section{Notations and Setup}
\label{sec:notation}
\textbf{Notation.}
We use standard lower case letters (e.g. $a$) to denote scalars, bold lower-case letters (e.g. $\mathbf{a}$) to denote vectors, and bold upper-case letters (e.g. $\mathbf{A}$) to denote matrices. For a vector $\mathbf{a}$, we use $\norm{\mathbf{a}}_2$ to denote its $\ell_2$ (Euclidean) norm, and $\norm{\mathbf{a}}_{\infty}$ to denote its $\ell_\infty$ norm. For a matrix $\mathbf{A}$, we use $\norm{\mathbf{A}}_{\max} = \max_{ij}\left|A_{ij}\right|$ to denote its $\max$ norm. We use $\mathbb{P}\paren{\cdot}$ to denote the probability of an event, and $\E\left[\cdot\right]$ to denote the expectation of a random variable. Moreover, $\texttt{Unif}\paren{I}$ denotes the uniform distribution on the interval $I$, $\texttt{Geom}\paren{\cdot}$ denotes the geometric distribution, and $\texttt{Bin}\paren{\cdot,\cdot}$ denotes the binomial distribution. Lastly, we use $\sigma\paren{a} = \max\{0, a\}$ to denote the ReLU activation.

\textbf{Setup.}
Similar to \mycite{Pensia2020Opimal}, our focus is to approximate an $L$-layer, ReLU activated target multi-layer perceptron (MLP) $f(\x)$ by pruning a  $2L$-layer, ReLU activated candidate MLP $g(\x)$. 
For some input vector $\x\in\R^{d_0}$, we assume $f(\x) = f^L(\x)$ has a fixed set of parameters $\{\W^{\ell}\}_{\ell=1}^L$, represented by:
\begin{align*}
    f^{\ell}(\x) &= \begin{cases}
    \W^Lf^{L-1}(\x), & \text{if }\ell=L,\\
    \sigma\paren{\W^{\ell} f^{\ell-1}(\x)}, & \text{if }\ell\in[L-1],\\
    \x, & \text{if }\ell=0,
    \end{cases}
\end{align*}
where $\W^{\ell}\in\R^{d_{\ell}\times d_{\ell-1}}$.
Similarly, let $g(\x) = g^{2L}(\x)$ with parameters $\{\U_{\ell}\}_{\ell=1}^{2L}$, represented by:
\begin{align*}
    g^{\ell}(\x) &= \begin{cases}
    \bold{U}^{2L}g^{2L-1}(\x), & \text{if }\ell=2L,\\
    \sigma\paren{\bold{U}^\ell g^{\ell-1}(\x)}, & \text{if }\ell\in[2L-1],\\
    \x, & \text{if }\ell=0,
    \end{cases}
\end{align*}
where $\U^{\ell}\in\R^{\hat{d}_{\ell}\times\hat{d}_{\ell-1}}$. In particular, $g$ is a neural network with twice the depth of $f$.
We consider the pruning and $\varepsilon$-perturbation of $g(\x)$ with a set of masks for the weights $\mathcal{S} = \{\SP^{\ell}\}_{\ell=1}^{2L}$ and perturbation matrices $\mathcal{Y}=\{\Y^\ell\}_{i=\ell}^{L}$, denoted as $g_{\mathcal{S},\mathcal{Y}}(\x) = g_{\mathcal{S},\mathcal{Y}}^{2L}(\x)$:
\begin{align*}
    g_{\mathcal{S},\mathcal{Y}}^{\ell}(\x)\hspace{-0.1cm} = \hspace{-0.1cm}\begin{cases}
    (\SP^{\ell}\odot(\bold{U}^{\ell}+\Y^{\ell}))g_{\mathcal{S},\mathcal{Y}}^{\ell-1}(\x), &\hspace{-0.3cm}\text{if }\ell=2L,\\
    \sigma\paren{ (\SP^{\ell}\odot(\bold{U}^{\ell}+\Y^{\ell}))g_{\mathcal{S},\mathcal{Y}}^{\ell-1}(\x)}, &\hspace{-0.3cm}\text{if }\ell\in[L-1],\\
    \x, &\hspace{-0.3cm}\text{ if }\ell=0.
    \end{cases}
\end{align*}
Intuitively, $g_{\mathcal{S},\mathcal{Y}}$ is constructed such that, in each layer of $g_{\mathcal{S},\mathcal{Y}}$, the weight matrix $\mathbf{U}^{\ell}$ is first applied with a perturbation matrix $\mathbf{Y}^{\ell}$ and then pruned by applying the mask $\mathbf{S}^{\ell}$
Let $\mathcal{F}_{\mathcal{Y}}$ denote the feasible set of the perturbation $\mathcal{Y}$. We make the following assumption:
\begin{asump}
\label{approx_assump}
We assume the following condition for $f, g$ and $\mathcal{F}_{\mathcal{Y}}$:
\begin{enumerate}[label=(\alph*)]
    \item For all $\ell\in\{0\}\cup[L]$, the weight matrix $\mathbf{W}^{\ell}$ of the target neural network $f$ satisfies $\norm{\mathbf{W}^\ell}\leq 1$ and $\norm{\mathbf{W}^{\ell}}_{\max}\leq\frac{1}{2}$.
    \item The initialization of $g$ satisfies $\mathbf{U}^{2\ell}_{ij}\sim\texttt{Unif}[-1,1]$, and $\mathbf{U}^{2\ell-1}_{ij} = 1$ if $i\leq \sfrac{\hat{d}_{2(\ell-1)}}{2}$ and $\mathbf{U}^{2\ell-1}_{ij} = -1$ if $i > \sfrac{\hat{d}_{2(\ell-1)}}{2}$ for all $\ell\in[L]$ and $j\in[\hat{d}_{2\ell-3}]$.
    \item The feasible set of $\mathcal{Y}$ is defined as:
    \begin{align*}
        \hspace{-0.65cm}\mathcal{F}_{\mathcal{Y}} = \cbrace{\mathcal{Y}:\forall\ell\in[L],\norm{\mathbf{Y}^{2\ell-1}}_{\max}\hspace{-0.15cm} = 0 \wedge \norm{\mathbf{Y}^{2\ell}}_{\max}\leq \varepsilon}.
    \end{align*}
\end{enumerate}
\end{asump}
\textbf{Remark 1.} \textit{Assumption \ref{approx_assump}(a) is similar to \mycite{Pensia2020Opimal}. In particular, the also assume that $\norm{\mathbf{W}^{\ell}}\leq 1$ for all $\ell$. We additional assume that $\norm{\mathbf{W}^{\ell}}_{\max}\leq\frac{1}{2}$ to facilitate the application of our subset sum result in Theorem (\ref{theorem:perturbed_rssp_bound}). This condition can be easily satisfied by most initialization methods. Assumption \ref{approx_assump}(b) states the initialization scheme for the candidate MLP $g$. Intuitively, for weights in layers with odd indices, we initialize the top half to $1$ and the lower half to $-1$, and for weights in layers with even indices, we initialize entries uniformly at random from $[-1, 1]$. Although different from \mycite{Pensia2020Opimal}, we show in later section that this initialization does not affect the difficulty of the approximation, and is only for the convenience of analysis. Assumption \ref{approx_assump}(c) defines the feasible set. In our setting, we only allow weights in layers with even indices to be perturbed. We show in later section that this feasible set is sufficient to establish the dependence of the over-parameterization on the perturbation scale $\varepsilon$.}

To represent the functional approximation of $f$ using $g_{\mathcal{S},\mathcal{Y}}$, we focus on the approximation error defined as:
\begin{align}
    \label{eq:nn_approx_err}
    \min_{\mathcal{Y}\in\mathcal{F}_{\mathcal{Y}},\mathcal{S}}\sup_{\x:\|\x\|\leq 1}\norm{f(\x) -  g_{\mathcal{S},\mathcal{Y}}\paren{\x}}.
\end{align}
% For weights $\btheta$, we consider its $\varepsilon$-perturbation $\mathcal{P}(\btheta) = \{\mathcal{P}\paren{\U^{\ell}}\}_{\ell=1}^{2L}$, with $\mathcal{P}$ applied to each $\U^{\ell}$ such that:
% \begin{align*}
%     \norm{\mathcal{P}\paren{\U^{\ell}} - \U^{\ell}}_{\max} = \max_{ij}\left|\mathcal{P}\paren{\U^{\ell}}_{ij} - \U^{\ell}_{ij}\right| \leq \varepsilon.
% \end{align*}
% We focus on the approximation error defined as:
% \begin{align}
%     \label{eq:nn_approx_err}
%     \min_{\mathcal{M},\mathcal{P}}\sup_{\x\in\mathcal{B}}\norm{f(\x) -  g_{\mathcal{M},\mathcal{P}\paren{\theta}}\paren{\x}}_2.
% \end{align}

\section{Subset Sum with $\varepsilon$-Perturbation}\label{sec:subsetsum}
For each layer in the target network, \mycite{Pensia2020Opimal} constructed a two-layer subnetwork with block structure, such that each block approximates a single entry in the weight matrix of the target network. 
In particular, they obtain a logarithmic-scale over-parameterization by formulating the approximation as a subset sum problem \mycite{LuekerExponentially, daCunha2022Revisiting}
%\textcolor{blue}{Maybe add one-two citations of old papers on the subset sum problem}.
Given a candidate set of values $\{x_i\}_{i=1}^n$ of size $n$ and a target value $z$, the solution to the subset sum problem finds the best approximation of $z$ using the sum of a subset of $\{x_i\}_{i=1}^n$. From an optimization perspective, the optimal approximation error $\eta^\star$ is the solution to the following problem:
\begin{align}
    \label{eq:rssp}
    \eta^\star = \min_{\boldsymbol{\delta}\in\{0, 1\}^n}\left|\sum_{i=1}^n\delta_ix_i - z\right|,
\end{align}
where $\delta_i \in \{0, 1\}$ is the indicator variable on whether $x_i$ is selected in the sum to approximate $z$.

From the perspective of strong LTH, we can treat $z$ as the weight entry in the target network that we wish to approximate, and $\{x_i\}_{i=1}^n$ as the weights in the candidate network we will prune. 
Here, $\delta_i = 1$ means that the $i$-th weight is kept, while $\delta_i = 0$ means that the $i$-th weight is pruned. 
\mycite{LuekerExponentially} shows that, with high probability over the randomness of $\x_i\sim\texttt{Unif}\paren{[-1,1]}$, a candidate set with size of the order $n = \Omega\paren{\log\eta^{-1}}$ is enough to guarantee that $\eta^*\leq \eta$ for all $z\in[-\sfrac{1}{2},\sfrac{1}{2}]$.

As an extension to the strong LTH, our setup incorporates an $\varepsilon$-perturbation on the weights of the random neural network. 
This calls for the attention of extending the random subset sum problem in \Eqref{eq:rssp} to a version with $\varepsilon$-perturbation added. 
%\textcolor{blue}{Are we the first to consider this problem? What about the papers you guys recently shared with Eddie?}

In particular, we consider the following joint minimization problem:
\begin{align}
    \label{eq:perturbed_rssp}
    \eta^\star = \min_{\boldsymbol{\delta}\in\{0,1\}^n,\mathbf{y}\in[-\varepsilon,\varepsilon]^n}\left|\sum_{i=1}^n\delta_i\left(x_i + y_i\right) - z\right|.
\end{align}

\begin{figure*}[!ht]
    \centering
    \includegraphics[width=0.8\linewidth]{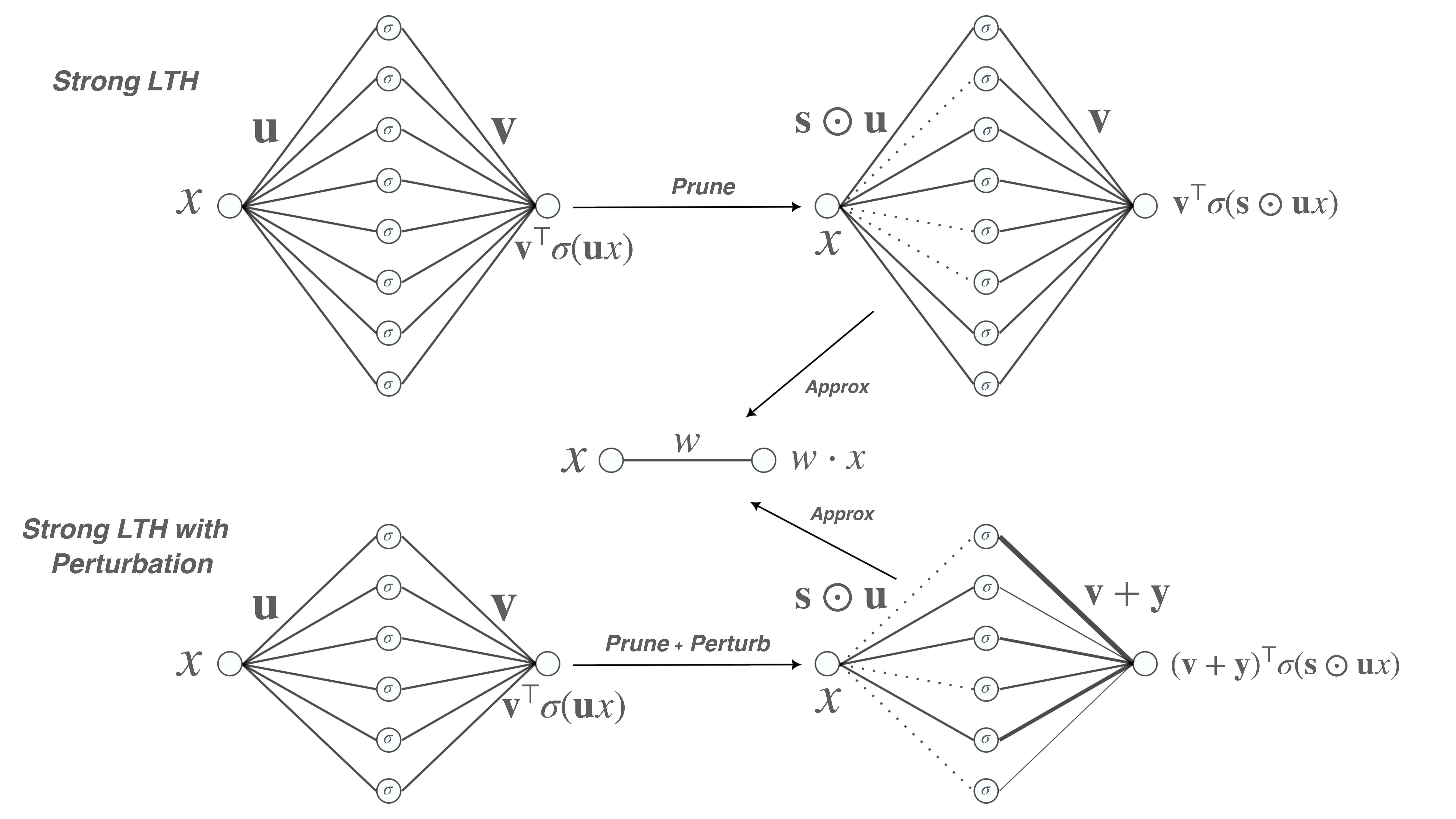}
    \caption{Diagram showing the effect of perturbation on the pruning scheme of strong LTH \mycite{Pensia2020Opimal}.}
    \label{fig:slth_diagram}
\end{figure*}

We denote the values that lead to the optimal approximation error as $\boldsymbol{\delta}^\star$ and $\mathbf{y}^\star$, respectively. 
In words, the above problem aims to select values from the set $\{x_i\}_{i=1}^n$ such that, after potential entrywise perturbation by some tunable $y_i \in [-\varepsilon, \varepsilon]$, the summation of the selected and perturbed $\sum_{i=1}^n\delta_i\left(x_i + y_i\right)$ will approximate $z$.

A central technical difficulty in our work is to extend the result of \mycite{LuekerExponentially} to incorporate such $\varepsilon$-perturbation. 
Intuitively, as the perturbation scale $\varepsilon$ becomes larger, each candidate is susceptible to a larger change in order to better approximate the objective $z$. 
This implies that we should only require a smaller size for the candidate size. This is indeed the case, as we show in the theorem below.

\begin{theorem}
\label{theorem:perturbed_rssp_bound}
Given a candidate set $\{x_i\}_{i=1}^n$ with $x_i\sim\texttt{Unif}\paren{[-1, 1]}$ for all $i\in[n]$. Consider the $\varepsilon$-perturbed random subset sum problem in \Eqref{eq:perturbed_rssp}. Let the number of candidates satisfy $n = K_1 + K_2$ with
\begin{align*}
    K_1=O\paren{\frac{\log\eta^{-1}}{\log\paren{\frac{5}{4} + \frac{\varepsilon}{2}}}}\quad ;K_2 = O\paren{1 + \frac{\log\eta^{-1}}{(1+\varepsilon)}}
\end{align*}
Then with probability at least $1 - \exp\paren{-\frac{K_2(1+\varepsilon)^2}{8(3-\varepsilon)^2}} - \exp\paren{-\frac{K_1}{18}} - \exp\paren{-\max\{\varepsilon,\eta\}K_1}$ we have that all $z\in[-\sfrac{1}{2},\sfrac{1}{2}]$ has a $2\eta$-approximation.
\end{theorem}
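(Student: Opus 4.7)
My plan is to extend the analysis of \mycite{LuekerExponentially} to incorporate the $\varepsilon$-perturbation, by partitioning the $n = K_1 + K_2$ candidates into two disjoint groups serving different roles. The first group of $K_2$ candidates will handle a base coverage phase, establishing an initial covering of $[-1/2, 1/2]$ at some coarse scale $\rho_0$. The second group of $K_1$ candidates will then drive an iterative refinement phase following Lueker's inductive framework, shrinking the gap from $\rho_0$ down to the target $\eta$.

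For the base coverage phase, I will track the achievable set $A_k = \{\sum_{i\in S}(x_i + y_i) : S \subseteq [k], y_i \in [-\varepsilon, \varepsilon]\}$, which is a union of intervals since each selected point $x_i$, once perturbed, contributes an interval of width $2\varepsilon$ to the set. A binomial/Chernoff argument on the number of $K_2$ candidates landing in favorable positions should yield that $A_{K_2}$ covers $[-1/2,1/2]$ with maximum gap at most $\rho_0$, with failure probability $\exp(-K_2(1+\varepsilon)^2/(8(3-\varepsilon)^2))$. The specific constants $(1+\varepsilon)$ and $(3-\varepsilon)$ should emerge naturally from matching the expected value and worst-case deviation in this binomial count, where an interval contribution of size roughly $1+\varepsilon$ is balanced against a sub-Gaussian deviation of order $3-\varepsilon$.

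For the refinement phase, I will start from the achievable set guaranteed by the base coverage and process the $K_1$ candidates one by one. The key inductive lemma I need to prove is the following: if the current maximum gap is $\rho$, then a new uniform candidate $x \in [-1,1]$ together with its $\varepsilon$-perturbation interval shrinks the gap to at most $\rho/(5/4 + \varepsilon/2)$ with some constant probability $p_0 > 0$. The baseline factor $5/4$ matches Lueker's original recursion for unperturbed sums, while the improvement $+\varepsilon/2$ accounts for the interval width $2\varepsilon$ combined with the uniform density $1/2$ on $[-1,1]$. Iterating for $\log_{5/4 + \varepsilon/2}(\rho_0/\eta)$ successful rounds brings the gap down to $\eta$, and a Chernoff bound on the number of successful rounds among $K_1$ trials yields the remaining failure probabilities $\exp(-K_1/18)$ and $\exp(-\max\{\varepsilon,\eta\}K_1)$. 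The factor $2$ in the stated $2\eta$-approximation should arise from a final overshoot of scale at most $\eta$ in the terminal refinement step.

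\textbf{Main obstacle.} The crux will be establishing the improved shrinking factor $5/4 + \varepsilon/2$ in the refinement phase. In Lueker's original analysis, the $5/4$ comes from bounding the probability that a uniform random $x \in [-1,1]$ lands in one of a constant number of favorable subintervals relative to the current gap structure; I will need to redo this calculation so that it reflects the fact that each new candidate contributes the full interval $[x - \varepsilon, x + \varepsilon]$ rather than just the point $x$. A careful geometric case analysis is needed to show that the interval contribution meaningfully extends the coverage uniformly over gap locations in $[-1/2, 1/2]$, and the union bound together with the martingale-style conditioning from \mycite{LuekerExponentially} must be adapted to the interval-valued achievable set, which introduces additional bookkeeping on how interval endpoints propagate through the recursion.
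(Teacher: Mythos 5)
Your plan is structurally different from the paper's proof, and its central lemma is the wrong statement to aim for. The factor $5/4$ in Lueker's analysis (and the $5/4+\varepsilon/2$ here) is \emph{not} a per-candidate multiplicative shrinkage rate for the maximum gap of the achievable set; it is a multiplicative \emph{growth} rate for the Lebesgue measure $p_k$ of the set of targets $z\in[-1/2,1/2]$ that already admit an approximation, valid only while that measure is small (below $1/4$). Concretely, the paper shows $\E[p_{k+1}]\geq p_k+\tfrac12(1-p_k)(p_k+\varepsilon)$ and converts this, via reverse Markov, into ``$p_{k+1}\geq(\tfrac54+\tfrac{\varepsilon}{2})p_k$ with probability at least $1/6$.'' Your proposed key lemma --- that one new candidate shrinks the \emph{maximum} gap by a constant factor with constant probability --- is a much stronger geometric claim: a single translate $A_k+x_{k+1}+[-\varepsilon,\varepsilon]$ would have to simultaneously chip away at every near-maximal gap, of which there can be many scattered through $[-1/2,1/2]$, and there is no reason this happens with probability bounded away from zero. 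This is precisely why Lueker (and this paper) track measure rather than gaps, and only convert measure to gaps at the very end: once the uncovered set has measure at most $\max\{\varepsilon,\eta\}$, every point is within that distance of a covered point, which is where the final factor of $2$ in ``$2\eta$-approximation'' actually comes from.

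Your allocation of $K_1$ and $K_2$ is also inverted relative to how the bounds arise, which makes the plan internally inconsistent. In the paper, $K_1$ (failure term $\exp(-K_1/18)$, denominator $\log(\tfrac54+\tfrac{\varepsilon}{2})$) is spent \emph{first}, growing the covered measure geometrically from about $\eta$ up to $1/4$ via a geometric/binomial/Hoeffding argument; $K_2$ (failure term $\exp(-K_2(1+\varepsilon)^2/(8(3-\varepsilon)^2))$) is spent \emph{second}, pushing the measure from $1/4$ to $1-\max\{\varepsilon,\eta\}$ via Azuma's inequality applied to normalized increments $Z_{k+1}=\frac{p_{k+1}-p_k}{(1-p_k)(p_k+\varepsilon)}$ together with a potential function $\psi(p)\approx\frac{1}{1+\varepsilon}\log\frac{p+\varepsilon}{1-p}$; the constants $(1+\varepsilon)$ and $(3-\varepsilon)$ are the bounds on $Z_{k+1}$ and on the martingale differences, not a binomial mean versus a sub-Gaussian width. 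Under your reading, a ``base coverage at constant scale $\rho_0$'' should cost $O(1)$ candidates, not $\Theta(\log\eta^{-1}/(1+\varepsilon))$, so the $K_2$ budget is unexplained. The third failure term $\exp(-\max\{\varepsilon,\eta\}K_1)$ does not come from your refinement Chernoff bound either: it comes from an extra block of $K_1+1$ candidates, needed only when $\varepsilon>\eta$, whose unperturbed sum is concentrated enough (Hoeffding) that perturbations alone can realize any value in $[\eta-\varepsilon,\varepsilon-\eta]$ and close the last $\varepsilon$-sized uncovered mass. Finally, you have not engaged with the one genuinely new difficulty of the perturbed problem: the natural recurrence couples the $\eta$-coverage indicator at step $k+1$ to the $(\eta+\varepsilon)$-coverage indicator at step $k$, and the paper must introduce an $\varepsilon$-extension construction and a surrogate sequence $\hat f_k$ to obtain a tractable lower bound. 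Without a substitute for that device, the recursion you would write down is not closed.
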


\textit{Sketch of proof:} The proof of Theorem (\ref{theorem:perturbed_rssp_bound}) is provided in Appendix (1). Compared with the proof of \mycite{LuekerExponentially}, we included the $\varepsilon$-perturbation when constructing the recurrence of the size of the target range that can be approximated. After introducing this $\varepsilon$-perturbation, we cannot directly apply the techniques in \mycite{LuekerExponentially}. We sketch the proof below while omitting details and the handling of edge cases of different $\varepsilon$:
\begin{enumerate}[leftmargin=*]
    \item We start by defining an indicator function $f_{k,\eta}(z)$ corresponding to the event that $z$ has an $\eta$-approximation by the first $k$ candidates. We show that this indicator function can be recursively defined: $f_{k+1,\eta}(z)$ can be written as a function of $f_{k,\eta}(z)$ and $f_{k,\eta+\varepsilon}(z)$. However, this sequence is hard to control as it involves the $f_{k,\eta+\varepsilon}(z)$. We further study the behavior of the set where $f_{k,\eta+\varepsilon}(z)=1$, and, by introducing the notion of $\varepsilon$-extensionm, we construct another sequence of indicator functions $\{\hat{f}_k\}_{k=1}^n$ that lower bound $f_{k,\eta}$ yet shows the advantage of large $\varepsilon$.
    \item As in \mycite{LuekerExponentially}, we define $p_k$ to be the fraction of $z$ on the interval $[-\sfrac{1}{2},\sfrac{1}{2}]$ such that $\hat{f}_k = 1$. Differently, we show that the expectation of $p_{k+1} - p_k$ is lower bounded by $\sfrac{1}{2}(1-p_k)(p_k+\varepsilon)$. This demonstrates the expected growth $p_{k+1}$ enjoys from $p_k$. Noticeably, this growth is larger when $\varepsilon$ is larger. Note that this property implies a lower bound on the expectation of $p_n$. Next, we apply several techniques to remove the expectation.
    \item We first show the lower bound on $K_1$ such that $p_{K_1} \geq \sfrac{1}{4}$ with a high probability. We do this by partitioning the interval $[0,\sfrac{1}{4}]$ into sub-intervals that represents geometric grown. We then show that the sum of the number of steps of growth that escapes these intervals can be represented as a binomial random variable, which can be bounded by applying Hoeffding's inequality.
    \item Next, starting from $p_k \geq \sfrac{1}{4}$, we lower-bound the summation of $Z_{k+1} = \frac{p_{k+1} -p_k}{p_k(1-p_k)}$ using Azuma's inequality. To relate the summation of $Z_{k+1}$ to the growth of $p_k$, we define a function $\psi(p)$ such that $\psi(p_{k+1}) - \psi(p_k)\geq Z_{k+1}$. In this way, we arrive at a lower bound on $\psi(p_{K_1 + K_2}) - \psi(\sfrac{1}{4})$. Enforcing a lower bound on $p_{K_1 + K_2}$ gives a lower bound on $K_2$.
\end{enumerate}
%\textcolor{blue}{Since I remember that the proof included some changes from Lueker's proof, I would like to highlight that with bullets.}

\textbf{Remark 2.}
\textit{Notice that the lower bound on the candidate set $n$ depends on $K_1$ and $K_2$, where $K_1$ scales inversely with $\log(\sfrac{5}{4}+\sfrac{\varepsilon}{2})$ and $K_2$ scales inversely with $1 + 6\varepsilon$. This implies that $n$ decreases monotonically as $\varepsilon$ increases. We will utilize this result to analyze the approximation error defined in \Eqref{eq:nn_approx_err}.}

\section{Strong Lottery Ticket Hypothesis with $\varepsilon$ Perturbation}
\label{sec:lth}

% \textcolor{magenta}{Things that are missing and/or we could do:
% \begin{itemize}
%     \item Build up the proof of this lemma, even if it is simple. Right now, the proof we have is not well-explained. \item Build the main theorem of this section.
%     \item Create a nice plot that describe the model we consider (even if the strong LTH papers have one, we should create one of our own).
% \end{itemize}}

% \textcolor{magenta}{Build up the story for this section. Now it starts with Lemma 1; it is a bit too much direct. We should first have a small paragraph about what this section is about.}
The theoretical result above provides the ``skeleton'' of techniques on how to incorporate perturbation into the approximation process for the subset sum problem. 
In what follows, we demonstrate how to apply this ``skeleton'' into the approximation of a target neural network. As in \mycite{Pensia2020Opimal}, we start with approximating a single weight entry using a two-layer ReLU neural network.

We highlight the differences between our setting and the setting in \mycite{Pensia2020Opimal} in Figure \ref{fig:slth_diagram}. 
Let the target weight be denoted by $w$. 
For each input value $x$, passing through this weight entry gives an output $w\cdot x$. 

The scheme considered by \mycite{Pensia2020Opimal} is represented in the upper half of the figure, denoted by \textit{``Strong LTH"}.  
In particular, they start with a random two-layer ReLU activated neural network, and performs pruning by applying mask $\mathbf{s}$ to the first layer\footnote[1]{Applying mask to the second layer weights achieves the same effect.} weight $\mathbf{u}$ and arrives at $\mathbf{s}\odot\mathbf{u}$. By choosing an optimal mask $\mathbf{s}$, for any input $x$, they consider the output of the pruned network $\mathbf{v}^\top\sigma\paren{\mathbf{s}\odot\mathbf{u}x}$ as an approximation of $w\cdot x$. 

Our scheme is described in the lower half of the figure, denoted by \textit{``Strong LTH with Perturbation"}. Different from \textit{``Strong LTH"}, we also apply changes to the second layer weight $\mathbf{v}$ by adding a perturbation vector $\mathbf{y}\in [-\varepsilon,\varepsilon]^n$ to it while performing the pruning. With an optimal mask $\mathbf{s}$ and perturbation vector $\mathbf{y}$, we consider $\paren{\mathbf{v} + \mathbf{y}}^\top\sigma\paren{\mathbf{s}\odot\mathbf{u}x}$ as an approximation of $w\cdot x$. As a result of this difference, our scheme potentially requires a smaller over-parameterization. In the lemma below, we show that the over-parameterization in our setting enjoys a monotonic decrease, as we increase $\varepsilon$.
\begin{lemma}
\label{lem:single_weight}
Let $g:\RR\rightarrow\RR$ be a randomly initialized network of the form $g(x)=\bold{v}^\top\sigma(\bold{u}x)$, where $\bold{v},\bold{u}\in\RR^{2n}$, $n=K_1+K_2$ with \begin{align*}
    K_1 &\geq C_1\left(\frac{\log(\eta^{-1})}{\log(\frac{5}{4}+\frac{\varepsilon}{2})}\right);K_2 \geq C_2\left(\frac{\log(\eta^{-1}))}{1+\varepsilon}\right),
\end{align*}
where $u_i=1$ for $i\leq n$, $u_i=-1$ for $i\geq n+1$, and $v_i's$ are drawn from $\texttt{Unif}[-1,1]$. Then there exist $\bold{s}\in\{0,1\}^{2n}, \bold{y} \in[-\varepsilon, +\varepsilon]^{2n}$ such that
\begin{align*}
    \sup_{x:|x|\leq 1}\left|wx-(\bold{v}+ \bold{y})^\top\sigma((\bold{u}\odot \bold{s})x)\right|<\eta,
    \vspace{-0.7cm}
\end{align*}
with probability at least $1 -\delta$ for all $w\in[-\frac{1}{2},\frac{1}{2}]$, with
\begin{align*}
\small
    \delta= \exp\paren{-\tfrac{K_2(1+\varepsilon)^2}{8(3-\varepsilon)^2}} + \exp\paren{-\tfrac{K_1}{18}}+\\
    \exp\paren{-\max\{\varepsilon,\eta\}K_1}
\end{align*}
\end{lemma}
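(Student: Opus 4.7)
The plan is to reduce approximation of the linear scalar map $x\mapsto wx$ to two independent instances of the $\varepsilon$-perturbed random subset-sum problem, so that Theorem~\ref{theorem:perturbed_rssp_bound} can be invoked as a black box. The key observation is that under the ``sign-split'' initialization $u_i=1$ for $i\leq n$ and $u_i=-1$ for $i\geq n+1$, the ReLU activations in the hidden layer collapse into just two distinct values: $\sigma(u_i x)=\sigma(x)$ for the first $n$ units and $\sigma(u_i x)=\sigma(-x)$ for the last $n$. Therefore, for any mask $\mathbf{s}\in\{0,1\}^{2n}$ and perturbation $\mathbf{y}\in[-\varepsilon,\varepsilon]^{2n}$, the output of the pruned-and-perturbed network takes the clean form
$$(\mathbf{v}+\mathbf{y})^\top\sigma((\mathbf{u}\odot\mathbf{s})x) \;=\; A\,\sigma(x) \;+\; B\,\sigma(-x),$$
with $A=\sum_{i=1}^n s_i(v_i+y_i)$ and $B=\sum_{i=n+1}^{2n} s_i(v_i+y_i)$.

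Using the identity $wx = w\sigma(x) - w\sigma(-x)$, together with the fact that $\sigma(x)$ and $\sigma(-x)$ have disjoint support and are both bounded by $1$ for $|x|\leq 1$, I get
$$\sup_{|x|\leq 1}\bigl|wx - A\sigma(x) - B\sigma(-x)\bigr| \;\leq\; \max\bigl(|w-A|,\;|{-w}-B|\bigr).$$
Hence it suffices to find $(\mathbf{s}_{1:n},\mathbf{y}_{1:n})$ realizing $A$ as an $\eta$-approximation of $w$ and $(\mathbf{s}_{n+1:2n},\mathbf{y}_{n+1:2n})$ realizing $B$ as an $\eta$-approximation of $-w$. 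Because the $v_i$ are i.i.d.\ $\texttt{Unif}[-1,1]$ and the two blocks use disjoint coordinates, these are two independent instances of the $\varepsilon$-perturbed random subset-sum problem~\eqref{eq:perturbed_rssp}, each with $n=K_1+K_2$ candidates and target lying in $[-\sfrac12,\sfrac12]$. Theorem~\ref{theorem:perturbed_rssp_bound} (applied with its internal error parameter set to $\eta/2$, so that its $2\eta$-approximation guarantee becomes the required $\eta$-approximation) then supplies both pairs $(\mathbf{s},\mathbf{y})$, and a union bound over the two failure events yields the composite bound $\delta$ stated in the lemma.

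The main obstacle is purely bookkeeping: I must verify that enlarging $K_1,K_2$ by constant factors (absorbed into $C_1,C_2$) suffices to simultaneously (i) replace $\eta$ by $\eta/2$ inside the $\log\eta^{-1}$ numerators, and (ii) absorb the factor of $2$ from the union bound while keeping each of the three exponential tails in the same functional form as in Theorem~\ref{theorem:perturbed_rssp_bound}. Since every dependence on $\eta$ enters through $\log\eta^{-1}$ and every tail is exponential in $K_1$ or $K_2$, inflating the constants $C_1,C_2$ achieves both reductions without altering the form of $\delta$. Once the constants are fixed, concatenating the two subset-sum solutions produces the single witness $(\mathbf{s},\mathbf{y})\in\{0,1\}^{2n}\times[-\varepsilon,\varepsilon]^{2n}$ required by the lemma.
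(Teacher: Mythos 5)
Your proposal is correct and follows essentially the same route as the paper's proof: split the hidden units into the $+1$ and $-1$ blocks, reduce the problem to two independent instances of the perturbed subset-sum theorem (one targeting $w$, one targeting $-w$), and combine via a union bound, with the $\eta/2$ rescaling and the union-bound factor absorbed into $C_1,C_2$. Your bound $\sup_{|x|\leq 1}|wx-A\sigma(x)-B\sigma(-x)|\leq\max(|w-A|,|{-w}-B|)$ via disjoint supports is a marginally sharper version of the paper's triangle-inequality step, but the argument is the same.
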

\begin{proof}[Sketch of the Proof]
We defer the detailed proof to the appendix and sketch the proof here.
Similar to \mycite{Pensia2020Opimal}, we decompose $wx$ into $wx = \sigma(wx)-\sigma(-wx)$. By our construction, the first half of the entries in $\mathbf{u}$ are $1$ and the second half are $-1$. Therefore, we decompose $\bold{u},\bold{v},\bold{y},\bold{s}$ by
\begin{align*}
    \bold{u} = \begin{pmatrix}
                \bold{u}_1\\
                \bold{u}_2
               \end{pmatrix},
    \bold{v} = \begin{pmatrix}
                \bold{v}_1\\
                \bold{v}_2
               \end{pmatrix},
    \bold{s} = \begin{pmatrix}
                \bold{s}_1\\
                \bold{s}_2
               \end{pmatrix},
    \bold{y} = \begin{pmatrix}
                \bold{y}_1\\
                \bold{y}_2
               \end{pmatrix},
\end{align*}
where $\uu_1=\bold{1}_n,\uu_2=\bold{-1}_n,\vv_1,\vv_2\in\RR^{n}$, $\s_1,\s_2\in\{0,1\}^n$, and $\y_1,\y_2\in[-\varepsilon,\varepsilon]^n$. This gives:
\begin{align*}
    (\bold{v}+ \bold{y})^\top\sigma((\bold{u}\odot \bold{s})x) & = \underbrace{(\bold{v}_1+ \bold{y}_1)^\top\sigma((\bold{u}_1\odot \bold{s}_1)x)}_{\tau_1} + \\
    &\underbrace{(\bold{v}_2+ \bold{y}_2)^\top\sigma((\bold{u}_2\odot \bold{s}_2)x)}_{\tau_2}
\end{align*}
We then apply Theorem (\ref{theorem:perturbed_rssp_bound}) twice: we first approximate $\sigma(wx)$ using $\tau_1$ and then approximate $-\sigma(-wx)$ using $\tau_2$. Lastly, we combine the two approximations and apply a union bound to complete the proof.
\end{proof}

\textbf{Remark 3.} \textit{Notice that the required number of hidden neurons (over-parameterization) in Lemma (\ref{lem:single_weight}) depends on $K_1$ and $K_2$, where $K_1$ scales inversely with $\log\paren{\sfrac{5}{4} + \sfrac{\varepsilon}{2}}$ and $K_2$ scales inversely with $1+\varepsilon$. Therefore, both terms decreases monotonically as $\varepsilon$ increases. In particular, when $\varepsilon = O(\eta^{-1})$, both $K_1$ and $K_2$ reduce to a constant term. This implies that, as long as the perturbation scale $\varepsilon$ is large enough, a constant over-parameterization is able to approximate any target weight arbitrarily well.}

\textbf{Remark 4.} \textit{Another difference between Lemma (\ref{lem:single_weight}) and Lemma 1 in \mycite{Pensia2020Opimal} is that the latter assumes both $\mathbf{u}$ and $\mathbf{v}$ to be randomly initialized, while we initialize $\mathbf{u}$ deterministically with half $1$s and half $-1$s, and keep $\mathbf{v}$ to be randomly initialized. Our initialization is only for the convenience of applying Theorem (\ref{theorem:perturbed_rssp_bound}). Our initialization is agnostic of the target weight $w$, so it does not alleviate the difficulty of the approximation. Neither does it raise the difficulty, since we can observe that, by setting $\varepsilon$ to zero, the requirement on the over-parameterization reduces to $n = O\paren{\log \eta^{-1}}$. This is the same as Lemma 1 in \mycite{Pensia2020Opimal}.}

Lemma \ref{lem:single_weight} highlights the idea of approximating a single weight entry by pruning and perturbing a randomly initialized two-layer ReLU neural network. Next, we extend this idea to the approximation of a deep neural network. The idea is to approximate each layer in the target network using a two-layer ReLU MLP . Each entry in the weight matrix of the target network is approximated by a subnetwork of the MLP as in Lemma (\ref{lem:single_weight}). Therefore, a concatenation of these MLPs gives an approximation of the target network.

\begin{theorem}
\label{thm:pslth}
Consider approximating $f$ with $g$ as defined above. Assume that assumption (\ref{approx_assump}) holds. Also, assume that for $1\leq \ell\leq L$,
\begin{align*}
    K_1 &= C_1d_{\ell-1}\left(\frac{\log\left(\frac{d_{\ell-1}d_{\ell}L}{\eta}\right)}{\log\paren{\frac{5}{4}+\frac{\varepsilon}{2}}}\right); \\
    K_2 & = C_2d_{\ell-1}\left(\frac{\log\left(\frac{d_{\ell-1}d_{\ell}L}{\eta}\right)}{1+\varepsilon}\right)\\
    & \texttt{dim}(\bold{U}^{2\ell}) = d_\ell \times \left(K_1+ K_2\right),\\
    & \texttt{dim}(\bold{U}^{2\ell-1})= \left(K_1+K_2\right) \times d_{\ell-1}.
\end{align*}
Then with probability at least $1 - 2d_1d_2L\delta$, 
\begin{align*}
    \min_{\mathcal{S},\mathcal{Y}}\sup_{\x:\|\x\|_\infty\leq 1}\|f(\x)-g_{\mathcal{S},\mathcal{Y}}\paren{\x}\|<\eta,
\end{align*}
where $g_{\mathcal{S},\mathcal{Y}}$ is a pruning \& $\varepsilon$-perturbation of $g$, and
\begin{align*}
    \delta= \exp\paren{-\tfrac{K_2(1+\varepsilon)^2}{8(3-\varepsilon)^2}} + \exp\paren{-\tfrac{K_1}{18}}+\\
    \exp\paren{-\max\{\varepsilon,\eta\}K_1}
\end{align*}
\end{theorem}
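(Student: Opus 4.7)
The plan is to reduce the approximation of the deep target network $f$ to the single-weight approximation of Lemma \ref{lem:single_weight}, via a layer-wise block construction, and then control the compounding of per-layer errors across depth.

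First, I would partition the candidate network's weight matrices so that each pair of consecutive layers $(2\ell-1, 2\ell)$ of $g$ jointly implements an approximation of the $\ell$-th layer of $f$, with one dedicated subset-sum-with-perturbation block per weight entry $W^\ell_{ij}$. The deterministic $\pm 1$ pattern of $\mathbf{U}^{2\ell-1}$ in Assumption \ref{approx_assump}(b) matches the role of $\mathbf{u}$ in Lemma \ref{lem:single_weight}; the random uniform entries of $\mathbf{U}^{2\ell}$ play the role of $\mathbf{v}$; the mask $\mathcal{S}$ selects the surviving neurons; and the perturbation $\mathbf{Y}^{2\ell}$ supplies the tunable $\mathbf{y}\in[-\varepsilon,\varepsilon]$ coordinates, consistent with the feasible set $\mathcal{F}_{\mathcal{Y}}$ of Assumption \ref{approx_assump}(c) that forbids perturbations on odd-indexed layers. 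Because Assumption \ref{approx_assump}(a) gives $|W^\ell_{ij}|\leq \tfrac{1}{2}$, the target weights lie in the range where Lemma \ref{lem:single_weight} applies. The widths $K_1, K_2$ stated in Theorem \ref{thm:pslth} are precisely $d_{\ell-1}$ times those required by Lemma \ref{lem:single_weight} at per-entry accuracy $\eta' = \eta/(d_{\ell-1} d_\ell L)$, so there is exactly enough width to allocate one Lemma~\ref{lem:single_weight} block per input coordinate $j$ of layer $\ell$.

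Invoking Lemma \ref{lem:single_weight} once per weight entry yields per-entry error at most $\eta'$ with failure probability at most $\delta$ per invocation. A union bound over all $\sum_{\ell=1}^L d_\ell d_{\ell-1} \leq 2 d_1 d_2 L$ weight entries then gives the global failure bound $2 d_1 d_2 L\,\delta$ stated in the theorem.

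The main obstacle, and the step I would handle most carefully, is converting these per-entry guarantees into a bound on the functional error $\sup_{\|\mathbf{x}\|\leq 1}\|f(\mathbf{x})-g_{\mathcal{S},\mathcal{Y}}(\mathbf{x})\|$. Writing $\hat{\mathbf{W}}^\ell$ for the effective $d_\ell\times d_{\ell-1}$ matrix realized by the $\ell$-th two-layer block of $g_{\mathcal{S},\mathcal{Y}}$, and $h^\ell, \hat{h}^\ell$ for the activations of $f$ and $g_{\mathcal{S},\mathcal{Y}}$ at depths $\ell$ and $2\ell$ respectively, I would establish the recurrence
\begin{align*}
    \|\hat{h}^\ell - h^\ell\|
    \leq \|\mathbf{W}^\ell\|\,\|\hat{h}^{\ell-1}-h^{\ell-1}\|
       + \|\hat{\mathbf{W}}^\ell-\mathbf{W}^\ell\|\,\|\hat{h}^{\ell-1}\|
\end{align*}
by using that ReLU is $1$-Lipschitz and that $\|\mathbf{W}^\ell\|\leq 1$ from Assumption~\ref{approx_assump}(a). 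The entrywise bound $|\hat{W}^\ell_{ij}-W^\ell_{ij}|\leq \eta'$ yields $\|\hat{\mathbf{W}}^\ell-\mathbf{W}^\ell\|\leq \sqrt{d_\ell d_{\ell-1}}\,\eta'$, and hence $\|\hat{\mathbf{W}}^\ell\|\leq 1+\sqrt{d_\ell d_{\ell-1}}\,\eta'=O(1)$, which keeps the intermediate activations $\|\hat{h}^{\ell-1}\|$ bounded on inputs with $\|\mathbf{x}\|\leq 1$. Unrolling the recurrence for $L$ layers gives a total error of order $L\sqrt{d_\ell d_{\ell-1}}\,\eta'$, which is at most $\eta$ for our choice of $\eta'$. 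The chief subtlety is preventing the hidden activations from blowing up with depth, which is controlled precisely by the perturbed operator-norm bound above.
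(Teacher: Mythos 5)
Your proposal follows essentially the same route as the paper: reduce each weight entry of each layer to Lemma \ref{lem:single_weight} at per-entry accuracy $\eta/(d_{\ell-1}d_\ell L)$, union bound over all entries, and propagate the per-layer error through depth using the $1$-Lipschitzness of ReLU and $\|\mathbf{W}^\ell\|\leq 1$, with the intermediate activations kept bounded by the near-isometry of the approximating blocks. The only cosmetic difference is that you package the per-entry errors into an ``effective matrix'' bounded in Frobenius norm, whereas the paper sums the uniform per-entry functional errors directly; both yield the stated bound.
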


\begin{proof}[Sketch of Proof]
We defer the detailed proof to the appendix and sketch the proof here. The idea is the same as \mycite{Pensia2020Opimal}: we approximate every entry in the weight matrix in each layer of $f$ by applying lemma (\ref{lem:single_weight}) up to some error. We then notice that, given the assumption that $\norm{\mathbf{W}^{\ell}}\leq 1$, each layer in $f$ is $1$-Lipschitz. Therefore, the corresponding approximation in $g_{\mathcal{S},\mathcal{Y}}$ is also about $1$-Lipschitz. We use this Lipschitzness to control the error during the forward propagation, and arrives at a bound on the error in equation (\ref{eq:nn_approx_err})
\end{proof}

\textbf{Remark 5.} \textit{Theorem (\ref{thm:pslth}) shows similar behavior with Lemma (\ref{lem:single_weight}). The overparameterization requirement depends on $K_1$ and $K_2$, both decreasing monotonically as $\varepsilon$ increases. As long as $\varepsilon = O\paren{\max_{\ell\in[L]}\log\paren{\sfrac{d_{\ell-1}d\ell L}{\eta}}}$, we only require the size of $g$ to be a constant times the size of $f$ to arrive at an $\eta$-approximation of $f$. Moreover, when $\varepsilon = 0$, the required over-parameterization reduces to the same form as in Theorem 1 of \mycite{Pensia2020Opimal}.}
% \textcolor{magenta}{Tasos: what is the interpretation of Theorem 2? Can we take it step by step to explain in works what Themre 2 states?}

% \textcolor{magenta}{Tasos: what about the pictures/schemes we discussed about?}

\section{Experiments}
\label{sec:expr}
%\textcolor{magenta}{We need to merge the two experimental sections between OPTML version and this one.}

\subsection{Approximating Neural Nets with SubsetSum and $\varepsilon$ Perturbation}
We would like to see how the amount of weight perturbation affects the required overparametrization. To explore this relationship, we approximate a two-layer, $500$ hidden node target network $g$ by \mycite{Pensia2020Opimal}. Each weight was approximated using a subset sum of $n$ randomly initialized candidates where each candidate was allowed to perturbed by at most $\varepsilon$. In particular, for some given $(\varepsilon$, $\eta)$ and seed $s$, we say $n\in \mathbb{N}$ satisfies the overparametrization requirement if:
\begin{align*}
    \forall w\in g, &~\exists \boldsymbol{\delta}\in\{0,1\}^n, \boldsymbol{y}\in[-\varepsilon,\varepsilon]^n \text{ such that: }\\ & \left|w-\sum_{i=1}^n\delta_i(x_i+y_i)\right|\leq \eta,\\
    & x_i\sim\texttt{Unif}\paren{[l, u]}\forall i=1,...,n,
\end{align*}
where $l$ and $u$ are the bounds of weights $w$ in the target network, $x_i$'s are generated randomly using seed $s$, and every set of $x_i$'s are unique to $w$. 

We randomly generate $10$ sets of $x_i$'s and record the minimum $n$ such that $8$ of such sets leads to the required approximation error.
We vary $\eta$ from $10^{-2}$ to $10^{-4}$ and choose $\varepsilon$ such that $\varepsilon/\eta$ varies between $0$ and $10$. Intuitively, $\varepsilon/\eta$ gives the relative effectiveness of $\varepsilon$. With a fixed $\varepsilon$, a smaller $\eta$ results in a larger $\varepsilon/\eta$, and in the meantime makes the approximation easier. 
We are interested how such $n$ changes as $\eta$ and $\varepsilon/\eta$ change. As $\varepsilon/\eta$ increase, we should expect a smaller size of the candidate set $n$. 

This is indeed the case,  since from \Figref{fig:expr1}, we can observe that for fixed $\eta$, $n$ decreases as $\varepsilon/\eta$ increases. More specifically, as $\varepsilon$ increases, more changes in $\varepsilon$ are required to make a decrease in the minimum over-parameterization requirement $n$, which coincides with Theorem \ref{theorem:perturbed_rssp_bound}.
\begin{figure}[t!]
    \label{fig:expr1}
    \centering
    \includegraphics[width=8cm]{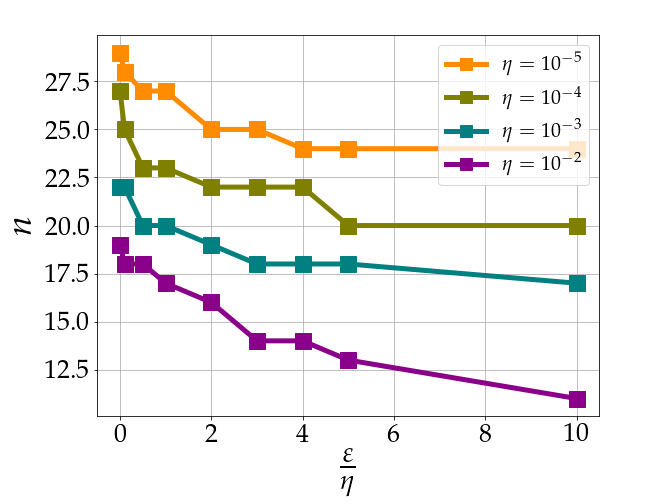}
    \caption{Change of the required size of the candidate set ($n$) v.s. relative perturbation scale (\sfrac{$\varepsilon$}{$\eta$}). 
    %\textcolor{magenta}{We should change the labels to have LaTeX, increase the font size, etc.}
    }
    \label{fig:expr1}
\end{figure}

\subsection{Perturbation Using Projected Gradient Descent}
We would like to explore the relationship between the weight perturbation achieved by (stochastic) gradient descent ((S)GD) and the desirable weight perturbation for reducing the over-parameterization in the strong LTH approximation. In particular, we hypothesize that
\begin{center}
    \textit{Weight perturbation using gradient-based methods constitutes a good solution compared to the desirable perturbation $\boldsymbol{\delta}^*$ in Equation (\ref{eq:perturbed_rssp}})
\end{center}
We propose a two-stage algorithm to validate this hypothesis. With a given perturbation scale $\varepsilon$, we start with training an over-parameterized neural network using projected gradient descent (PGD) to convergence. Note that, by applying PGD, we guarantee that each value of the neural network weight stays in an $\varepsilon$-neighborhood of initialization. 
This is reflected in lines 3-5 in Algorithm \ref{PGD+StrongLTH}: per iteration, we first complete a regular (S)GD step (line 3), but we then truncate the updated value so that every updated vaule lines in the interval $[-\varepsilon, \varepsilon]$ (line 4). We then apply the resulting update in line 5.

We run \texttt{edge-popup} \mycite{ramanujan2019What} for a range of pruning (sparsity) levels (line 9 in Algorithm \ref{PGD+StrongLTH}). 
This step is interpreted as applying a standard pruning technique --initially designed to start from random initialization, like \texttt{edge-popup}-- but now on the perturbed initialization based on PGD.
We consider the best accuracy amongst all pruning levels (percentage of weights pruned) to be the optimal approximation. We refer the readers to Algorithm \ref{PGD+StrongLTH} for more detail. Here, $\min\{\cdot\}$ refers to the entrywise minimum and $\text{abs}(\cdot)$ refers to the entrywise absolute value. Note that, by applying the projection operation in line 4, we guarantee that $\norm{\mathbf{W}_t - \mathbf{W}_0}_{\max}\leq \varepsilon$ for all $t\in[T]$.

\begin{algorithm}
\caption{PGD+StrongLTH}
\label{PGD+StrongLTH}
\textbf{Input: }Perturbation scale $\varepsilon$, neural network loss $\mathcal{L}$, initial weight $\mathbf{W}_0$, learning rate $\{\alpha_t\}_{t=0}^{T-1}$
\begin{algorithmic}[1]
\label{alg:pgd}
\State $\Delta\mathbf{W}\leftarrow 0$
\For{$t\in\{0,\dots, T-1\}$}
\State $\hat{\mathbf{W}} \leftarrow \Delta\mathbf{W} - \alpha_t\nabla\mathcal{L}(\mathbf{W}_t)$
\State \begin{small}$\Delta\mathbf{W} \leftarrow \texttt{sign}(\hat{\mathbf{W}})\cdot\min\{\texttt{abs}(\hat{\mathbf{W}}),\varepsilon\}$\end{small}
\State $\mathbf{W}_{t+1}\leftarrow \mathbf{W}_0 + \Delta\mathbf{W}$
\EndFor
\State $\ell^* \leftarrow\infty\;, \mathcal{M}^*\leftarrow \text{None}$
\For{pruning level $s\in\{0.1,0.2,\dots,0.9\}$}
    \State $\ell, \mathcal{M} \leftarrow\texttt{Edge-Popup}(\mathcal{L},\mathbf{W}_T, s)$
    \If{$\ell \leq \ell^*$}
    \State $\ell^*\leftarrow \ell\;,\mathcal{M}^*\leftarrow\mathcal{M}$
    \EndIf
\EndFor
\State \textbf{return} Optimal loss $\ell^*$, mask $\mathbf{M}^*$ and sparsity level $s$
\end{algorithmic}
\end{algorithm}
\begin{figure*}[]
    \centering
    \includegraphics[width=1\linewidth]{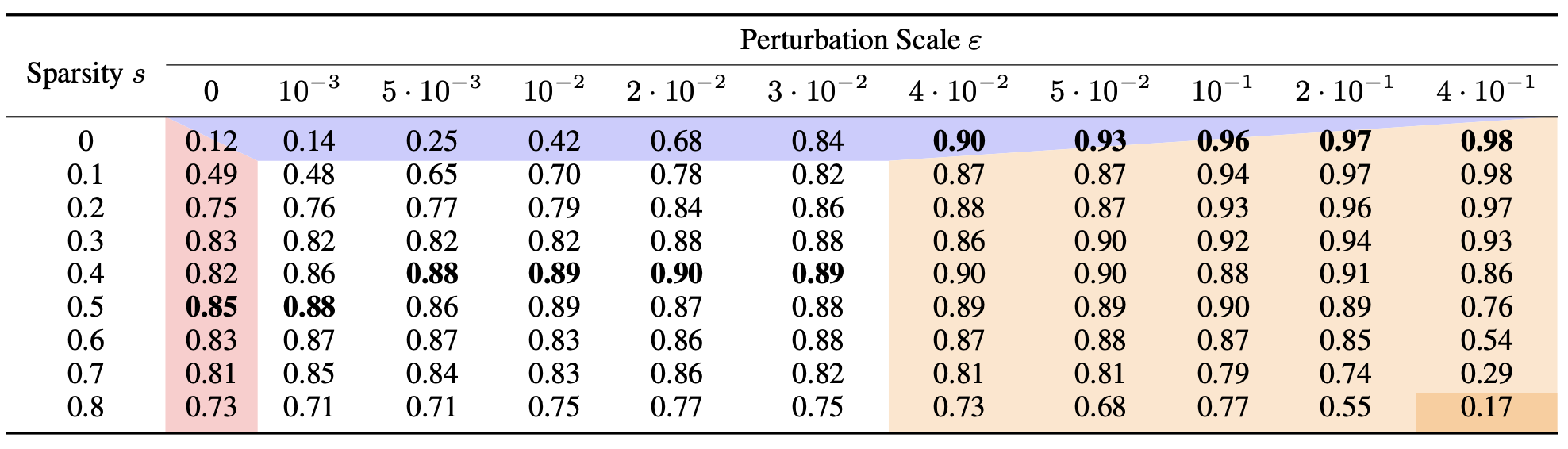}
    \caption{Test accuracy for different pruning level $s$ and perturbation scale $\varepsilon$. For each different perturbation scale (each column), the highest accuracy is marked bold.}
    \label{tb:pgd_result}
\end{figure*}
We train a four-layer multi-layer perceptron (MLP) on the MNIST dataset, with each layer having $500$ hidden nodes. We use Algorithm \ref{PGD+StrongLTH} 
to train the network: we use a learning rate of $0.03$ for PGD and train the network for $100$ epochs; for pruning we use \texttt{edge-popup} with a learning rate of $0.1$ and train the network for $50$ epochs with cosine annealing. 
All weights in the network are initialized from $\texttt{Unif}\paren{[-\sfrac{1}{2},\sfrac{1}{2}]}$, and $\varepsilon$ ranges from $0$ to $0.4$. The results are shown in Table \ref{tb:pgd_result} and Figure \ref{fig:expr2}.
%\subsubsection*{Acknowledgements}

\begin{figure}[!ht]
    \centering
    \begin{subfigure}[b]{0.51\linewidth}
        \centering
        \includegraphics[width=\linewidth]{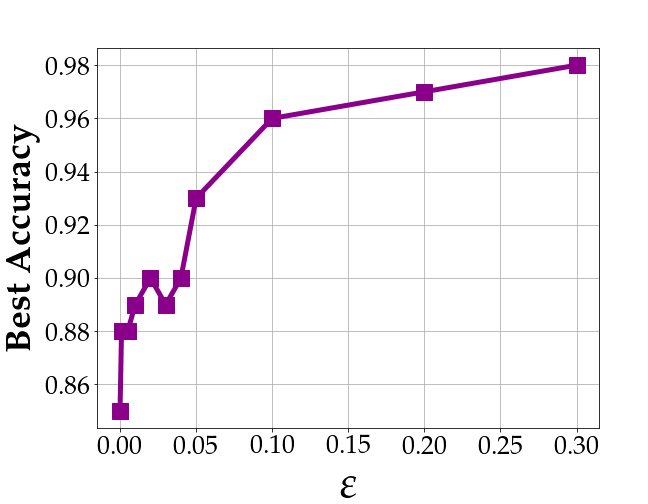}
        \caption{}
        \label{fig:accuracy_eps}
    \end{subfigure}
    \hspace{-0.4cm}
    \begin{subfigure}[b]{0.51\linewidth}
        \centering
        \includegraphics[width=\linewidth]{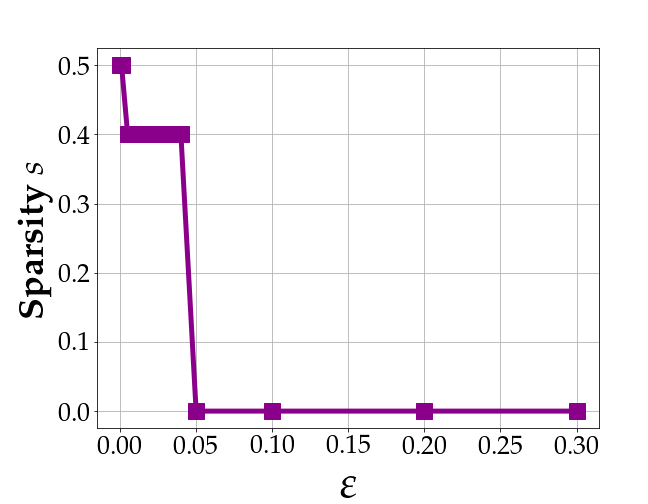}
        \caption{}
        \label{fig:sparsity_eps}
    \end{subfigure}\vspace{-0.2cm}
    \caption{Relationship between perturbation scale $\varepsilon$, optimal sparsity $s^\star$ and the final accuracy. \textbf{\textit{(a)}}: best pruned accuracy across different pruning level on the pruned network versus the perturbation scale. \textbf{\textit{(b)}}: optimal sparsity in pruning versus the perturbation scale.}
    \label{fig:expr2}
    \vspace{-0.3cm}
\end{figure}

Figure \ref{fig:expr2} plots the relationship between the best accuracy among all sparsity (pruning) levels and the perturbation scale \Figref{fig:accuracy_eps}, as well as the sparsity (pruning) level that achieves the best accuracy versus the perturbation scale \Figref{fig:sparsity_eps}. First, in \Figref{fig:accuracy_eps}, one could see that as the perturbation scale increases--where SGD has a larger freedom to find optimize the parameters--the pruned accuracy increases. This observation corroborates our hypothesis that SGD finds a good weight perturbation that facilitate the pruning process. \Figref{fig:sparsity_eps} provides more insight into this behavior: as the perturbation scale increases, the sparsity level that achieves the best accuracy decreases, meaning that the best pruning will prune a smaller number of weights. This is because as the perturbation scales increases, weights learned by SGD are introduced with a dependence on each other. Such dependence becomes stronger as the perturbation scale increases, and will be interrupted by pruning a large number of weights.

Figure \ref{tb:pgd_result} gives a more detailed description of the results. In particular, the best accuracy across all sparsity (pruning) levels for each perturbation scale $\varepsilon$ is \textbf{marked bold}. 
In addition, we use different colors to mark different behaviors our experiment demonstrate when varying $s$ and $\varepsilon$. 
To be more specific, the region \textcolor{red!50}{\textbf{marked red}} represents the behavior studied by the original strong LTH \mycite{Pensia2020Opimal}, where no perturbation is allowed ($\varepsilon = 0$). 
In this case, the optimal accuracy is achieved at a non-trival pruning level ($\sim$0.5). 
The region \textcolor{blue!50}{\textbf{marked purple}} represents standard training of neural network with different extent of training. As $\varepsilon$ increases (more training allowed), the accuracy also increases (higher $\varepsilon$ values allowed).

Finally, the \textcolor{orange!50}{\textbf{orange region}} marks the scenario where the behavior is dominated by SGD. In this region, the large perturbation scale allows SGD to establish a strong dependence between the weight entries, such that any pruning will break this dependence and result in a decrease in the accuracy. In particular, the lowest accuracy (\textcolor{orange!100}{\textbf{marked by dark orange}}) appears when we allow largest extend of training, while enforcing the largest pruning level.

\section{Concluding Remarks}
Our work serves as a further step into understanding how the pre-training process affects the accuracy of neural network pruning. We generalized the weight change in the training process to some $\varepsilon$-perturbation around the initialization, and provide theoretical guarantee to a more general version of the strong Lottery Ticket Hypothesis \mycite{ramanujan2019What, Pensia2020Opimal, malach2020Proving} by introducing the freedom of $\varepsilon$-perturbation into the pruning process. We establish the relationship between the perturbation scale $\varepsilon$ and the over-parameterization requirement of the candidate network. As an intermediate step of our analysis, we also provide the theoretical guarantee of a generalized version of the subset sum problem \mycite{LuekerExponentially}.

Moreover, we explored through experiments whether SGD could find a good perturbation. By testing the a combination of the $\varepsilon$-bounded projected gradient descent and the \texttt{edge-popup} algorithm, we give a positive answer to the question above. We also observed that, as the perturbation scale increases, the optimal accuracy of the pruned network appears at a lower pruning level.

As a next step of our work, one could be interested in exploring the theoretical guarantee for SGD's property of finding a good perturbation. An interesting starting point is the approximation of a weight vector $\mathbf{w}$: given a set of input data points $\{\mathbf{x}_i\}_{i=1}^m$, whether solving the optimization problem of $\min_{\mathbf{U}}\sum_{i=1}^m\norm{\bm{1}^\top\mathbf{U}\mathbf{x}_i - \mathbf{w}^\top\mathbf{x}_i}_2^2$ using gradient descent
\begin{align*}
    \mathbf{U}_{t+1} = \mathbf{U}_{t} -\alpha\frac{\partial}{\partial\mathbf{U}}\sum_{i=1}^m\norm{\bm{1}^\top\mathbf{U}_t\mathbf{x}_i - \mathbf{w}^\top\mathbf{x}_i}_2^2
\end{align*}
will satisfy the descending property
\begin{align*}
    \norm{\mathbf{w} - \paren{\mathbf{U}_{t+1}\odot\mathbf{S}_{t+1}}^\top\bm{1}}_2 < \norm{\mathbf{w} - \paren{\mathbf{U}_{t}\odot\mathbf{S}_{t}}^\top\bm{1}}_2,
\end{align*}
where $\mathbf{S}_t$ is the optimal mask in iteration $t$
\begin{align*}
    \mathbf{S}_t = \text{argmin}_{\mathbf{S}}\norm{\mathbf{w} - \paren{\mathbf{U}_{t}\odot\mathbf{S}}^\top\bm{1}}_2.
\end{align*}
A theoretical analysis on this problem will derive a further connection between the pre-training and the pruning process, and will potentially be an important step in understanding the mystery of the Lottery Ticket Hypothesis.

\clearpage
\bibliography{mybib}

\onecolumn

\appendix

\section{Proof of Theorem 1}
\label{proof:perturbed_rssp_bound}

The subset sum problem considers finding $\boldsymbol{s}\in\{0, 1\}^n$ that minimizes $\ell(z, \boldsymbol{s}) = |z - \sum_{i=1}^ns_ix_i|$ for a given $z$ and given $x_i$'s. Previous work finds that, with $n = \Omega(\log \sfrac{1}{\eta})$, it holds with high probability that there exists $\boldsymbol{s}\in\{0, 1\}^n$ such that $\ell(z, \boldsymbol{s})\leq \eta$. Alternatively, this problem can be started as finding the smallest $n$ such that $\eta^* \leq \eta$ with
\begin{align*}
    \eta^* = \min_{\boldsymbol{s}\in\{0, 1\}^n}\ell(z, \boldsymbol{s})
\end{align*}
In our case, we would like to give the freedom of each $x_i$ to be perturbed for a small degree $\varepsilon$. In particular, we extend the definition of $\ell$ to
\begin{align*}
    \ell(z, \boldsymbol{s},\mathbf{y}) = \left|z - \sum_{i=1}^ns_i(x_i + y_i)\right|
\end{align*}
and seeks condition of $n$ such that $\eta^*\leq \eta$ with
\begin{align}
    \eta^* = \min_{\boldsymbol{s}\in\{0, 1\}^n, \mathbf{y}\in[-\varepsilon, \varepsilon]^n}\ell(z, \boldsymbol{s},\mathbf{y})
\end{align}
If this condition is met for a fixed $z$, we say that such $z$ has an $\eta$ approximation. 
\begin{asump}
\label{base_asump}
Let the candidate values $x_i\sim\texttt{Unif}\paren{[-1, 1]}$ for all $i\in[n]$, and the target value $z\in[-\sfrac{1}{2},\sfrac{1}{2}]$. Let $0 \leq \varepsilon\leq \eta\leq 1$ be given.
\end{asump}
Notice that if $\varepsilon > 1$, then by Hoeffding's inequality,
\begin{align*}
    \mathbb{P}\paren{\left|\sum_{i=1}^nx_i\right|\geq \frac{n\varepsilon}{2}} \leq \exp\paren{-\frac{n\varepsilon^2}{2}}
\end{align*}
When $\left|\sum_{i=1}^nx_i\right|\leq \frac{n\varepsilon}{2}$ holds, we have that $\left|\sum_{i=1}^n(x_i + y_i)\right|$ can be anything in $[-\sfrac{n\varepsilon}{2},\sfrac{n\varepsilon}{2}]$ by varing $y_i$. Therefore, as long as $n = \frac{1}{\varepsilon^2}\log s^{-1}$, it holds with probability at least $1-s$ that $\eta^* = 0$ for all $z\in[-\sfrac{1}{2},\sfrac{1}{2}]$. Thus, our focus is on the case of $\varepsilon\leq 1$.
Under this assumption, we attempts to prove the following theorem
\begin{theorem}
\label{theorem:perturbed_rssp_bound}
Let the number of candidates satisfy $n = K_1 + K_2$ with
\begin{align*}
    K_1=O\paren{\frac{\log\eta^{-1}}{\log\paren{\frac{5}{4} + \frac{\varepsilon}{2}}}}\quad ;K_2 = O\paren{1 + \frac{\log\eta^{-1}}{(1+\varepsilon)}}
\end{align*}
Then with probability at least $1 - \exp\paren{-\frac{K_2(1+\varepsilon)^2}{8(3-\varepsilon)^2}} - \exp\paren{-\frac{K_1}{18}} - \exp\paren{-\max\{\varepsilon,\eta\}K_1}$ we have that all $z\in[-\sfrac{1}{2},\sfrac{1}{2}]$ has a $2\eta$-approximation.
\end{theorem}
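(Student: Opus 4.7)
The plan is to mimic the recursive set-growth analysis of \mycite{LuekerExponentially}, enriched at every step by the $\varepsilon$-perturbation freedom. I would first introduce the indicator $f_{k,\eta}(z) := \mathbf{1}\{z\text{ admits an }\eta\text{-approximation using the first }k\text{ candidates}\}$ and let $A_k := \{z : f_{k,\eta}(z) = 1\}$. Conditioning on whether $x_{k+1}$ is selected and optimizing over $y_{k+1} \in [-\varepsilon, \varepsilon]$, a direct computation yields the recurrence
\begin{align*}
    f_{k+1,\eta}(z) \;=\; f_{k,\eta}(z) \;\vee\; f_{k,\eta+\varepsilon}(z - x_{k+1}),
\end{align*}
where the inflated tolerance $\eta + \varepsilon$ arises from absorbing the fresh perturbation $y_{k+1}$ into the allowed approximation error. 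In set language this reads $A_{k+1} = A_k \cup (x_{k+1} + A_k^{\varepsilon})$, with $A_k^{\varepsilon}$ the Minkowski sum of $A_k$ with $[-\varepsilon, \varepsilon]$. Because the tolerance-inflating coupling to $f_{k,\eta+\varepsilon}$ is awkward to iterate at the functional level, I would construct a surrogate sequence $\hat{f}_k \le f_{k,\eta}$ whose associated sets $\hat{A}_k$ obey an autonomous $\varepsilon$-extension recursion while still retaining the $\varepsilon$-gain.

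Setting $p_k := |\hat{A}_k \cap [-\tfrac{1}{2}, \tfrac{1}{2}]|$, the central technical estimate I would aim to prove is the drift inequality
\begin{align*}
    \mathbb{E}\!\left[p_{k+1} - p_k \,\big|\, \hat{A}_k\right] \;\ge\; \tfrac{1}{2}(1 - p_k)(p_k + \varepsilon).
\end{align*}
This follows from Fubini on $x_{k+1} \sim \texttt{Unif}\paren{[-1,1]}$: the expected increment equals $\int_{-1/2}^{1/2} \mathbf{1}\{z \notin \hat{A}_k\} \cdot \mathbb{P}_{x_{k+1}}\!\left[z - x_{k+1} \in \hat{A}_k^{\varepsilon}\right] dz$, and one uses that $|\hat{A}_k^{\varepsilon} \cap [-\tfrac12,\tfrac12]| \ge p_k + \varepsilon$ whenever $\hat{A}_k$ is non-empty (modulo boundary trimming at $\pm\tfrac{1}{2}$). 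Specializing to $\varepsilon = 0$ recovers Lueker's $\tfrac{1}{2}(1-p_k)p_k$ drift, and the extra $\varepsilon$-term is precisely the quantitative mechanism through which perturbation reduces the required $n$.

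I would then follow a two-phase concentration argument. For the cold-start phase (reaching $p_k \ge \tfrac14$), I would partition $[0, \tfrac14]$ into sub-intervals whose right endpoints scale geometrically by the factor $(\tfrac54 + \tfrac{\varepsilon}{2})$; the drift implies that each sub-interval is exited in a stochastically dominated bounded number of steps, and the sum of these escape times is controlled by Hoeffding, yielding $K_1 = O(\log\eta^{-1} / \log(\tfrac54 + \tfrac{\varepsilon}{2}))$ together with the failure terms $\exp(-K_1/18)$ and $\exp(-\max\{\varepsilon,\eta\}K_1)$. In the warm phase $p_k \in [\tfrac14, 1-\eta]$, I would introduce the potential $\psi$ solving $\psi'(p) = 1/[p(1-p)]$ so that, by concavity, $\psi(p_{k+1}) - \psi(p_k) \ge Z_{k+1} := (p_{k+1}-p_k)/[p_k(1-p_k)]$, whose conditional mean is at least of order $(1 + \varepsilon)$; Azuma on the bounded martingale-difference sum of the $Z_{k+1}$'s, combined with inverting $\psi$ at the target level $p_{K_1+K_2} \ge 1 - \eta$, delivers $K_2 = O\!\left((1 + \log\eta^{-1})/(1 + \varepsilon)\right)$ and the failure probability $\exp(-K_2(1+\varepsilon)^2/[8(3-\varepsilon)^2])$. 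A union bound plus an $\eta$-mesh discretization argument then promotes ``the covered set has Lebesgue measure $\ge 1 - \eta$'' to ``every $z\in[-\tfrac12,\tfrac12]$ has a $2\eta$-approximation'', which absorbs the factor $2$ in the statement.

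The main obstacle is the perturbation coupling: the natural recursion mixes the two tolerances $\eta$ and $\eta + \varepsilon$, so a naive iteration would inflate the tolerance to $\eta + k\varepsilon$ after $k$ steps and destroy the analysis. The key construction is the surrogate $\hat{f}_k$, which charges the $\varepsilon$-slack exactly once per added candidate by working at the set level through the $\varepsilon$-extension while remaining pointwise dominated by $f_{k,\eta}$. Once this is in place, the remaining drift, Hoeffding, and Azuma estimates follow Lueker's template, but every constant must be re-derived to surface the correct $\varepsilon$-dependence -- this bookkeeping is precisely what forces the split into the two regimes and produces the two separate terms $K_1$ and $K_2$.
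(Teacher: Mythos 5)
Your proposal reproduces the paper's argument essentially step for step: the same recurrence for $f_{k,\eta}$ coupling tolerances $\eta$ and $\eta+\varepsilon$, the same surrogate sequence $\hat f_k$ built from $\varepsilon$-extensions of the feasible set, the same drift bound $\E[p_{k+1}-p_k]\ge\tfrac12(1-p_k)(p_k+\varepsilon)$, and the same two-phase concentration scheme (geometric interval escape times controlled by Hoeffding up to $\tfrac14$, then a potential-function martingale argument with Azuma). So the approach is the same and the overall plan is sound.

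Two points need repair. First, the warm-phase step as written fails: with $\psi'(p)=1/[p(1-p)]$ and $Z_{k+1}=(p_{k+1}-p_k)/[p_k(1-p_k)]$, the inequality $\psi(p_{k+1})-\psi(p_k)\ge Z_{k+1}$ is a first-order lower bound at the \emph{left} endpoint, which requires $\psi'$ to be nondecreasing on $[p_k,p_{k+1}]$; but $\psi(p)=\log\paren{p/(1-p)}$ has $\psi'$ \emph{decreasing} on $[\tfrac14,\tfrac12]$ (the function is concave there, and concavity gives the inequality in the wrong direction), so the bound is false whenever $p_k<\tfrac12$. The paper avoids this by normalizing $Z_{k+1}$ by $(1-\tilde p_k)(\tilde p_k+\varepsilon)$ --- which also makes the increments bounded by $2/(1+\varepsilon)$, the source of the $(1+\varepsilon)^2$ in the Azuma exponent --- and by adding a linear correction $\tfrac{16}{3}p$ to $\psi$ so that $\psi'$ dominates the normalizer uniformly across the point where $\psi'$ turns around. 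Second, a bookkeeping slip: the failure term $\exp\paren{-\max\{\varepsilon,\eta\}K_1}$ does not come from the cold-start phase; it arises from a \emph{third} phase needed when $\varepsilon>\eta$, since the Azuma phase only certifies coverage up to measure $1-\max\{\varepsilon,\eta\}$, and one must spend an extra batch of $K_1+1$ fresh candidates (via a direct Hoeffding argument on their sum) to shrink the uncovered gap from $\varepsilon$ down to $\eta$ before the ``measure $1-\eta$ covered implies every point has a $2\eta$-approximation'' step applies. Both issues are fixable without changing the architecture of the proof.
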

We define the indicator function for the existence of $\hat{\eta}$-approximation within the first $k$ candidate.
\begin{align*}
    f_{k,\hat{\eta}}(z) = \indy{\exists s\in\{0,1\}^k, y\in[-\varepsilon,\varepsilon]^k\text{ s.t. }\left|\sum_{i=1}^ks_i(x_i + y_i) - z\right|\leq \hat{\eta}}
\end{align*}
This indicator function has the following recurrence
\begin{align*}
    f_{0,\eta} = \indy{|z|\leq \eta};\quad f_{k+1,\eta} = f_{k,\eta}(z) + \paren{1 - f_{k,\eta}(z)}f_{k,\eta+\varepsilon}(z- x_{k+1})
\end{align*}
Define the following random variable (depending on $\{x_k\}_{i=1}^k$)
\begin{align*}
    p_k = \int_{-\sfrac{1}{2}}^{\sfrac{1}{2}}f_{k,\eta}(z)dz
\end{align*}
This random variable denotes the portion of $z\in[-\sfrac{1}{2},\sfrac{1}{2}]$ that can be approximated within $\eta$ error.
\begin{defin}
For a candidate set $\{x_i\}_{i=1}^n$, and some $k\in\{0\}\cup[n]$, define its $(k, \eta)$-feasible set as
\begin{align*}
    \mathcal{F}_{k,\eta} = \cbrace{z\in\brac{-\sfrac{1}{2},\sfrac{1}{2}}:\exists s\in\{0,1\}^k\text{ s.t. }\left|\sum_{i=1}^ks_ix_i - z\right|\leq \hat{\eta}}
\end{align*}
\end{defin}
By definition, $\mathcal{F}_{k,\eta}$ is the union of finitely many mutually disjoint closed intervals on $\brac{-\sfrac{1}{2},\sfrac{1}{2}}$. Let $\mu$ denote the Lebesgue measure on $\R$. Consider the following definition of $\varepsilon$-extension of a set
\begin{defin}
Let $I\subset[-\sfrac{1}{2},\sfrac{1}{2}]$ be a closed interval. A set $S$ is called an $\varepsilon$-extension of I, denoted $S\in\Xi_{\varepsilon}(I)$ if
\begin{enumerate}
    \item $S\subseteq [-\sfrac{1}{2},\sfrac{1}{2}]\setminus I$
    \item for all $s\in S$, we have that $\min_{a\in I}|s - a|\leq \varepsilon$
    \item $\mu(S) = \min\cbrace{\varepsilon, 1 - \mu(I)}$
\end{enumerate}
By definition, for each $I\subset[-\sfrac{1}{2},\sfrac{1}{2}]$, there is at least one $\varepsilon$-extension of $I$, since we can choose
\begin{align*}
    S = [-\sfrac{1}{2},\sfrac{1}{2}]\cup\begin{cases}
    [-\sfrac{1}{2},\inf I)\cup (\sup I,\sup I -\inf I + \varepsilon - \sfrac{1}{2}] & \text{ if }\inf I \leq \varepsilon - \frac{1}{2}\\
    [\inf I - \varepsilon,\inf I) & \text{ otherwise}
    \end{cases}
\end{align*}
Let $\mathcal{F} = \cup_{j=1}^mI_j$ be a finite union of closed intervals. A set $S$ is called an $\varepsilon$-extension of $\mathcal{F}$, denoted by $S\in\Xi_{\varepsilon}(\mathcal{F})$ if
\begin{enumerate}
    \item $S \subseteq \cup_{j=1}^m\cup_{\xi_j\in\Xi_{\varepsilon}(I_j)}\xi_j\setminus \mathcal{F}$
    \item $\mu(S) = \min\{\varepsilon, 1 - \mu(\mathcal{F})\}$
\end{enumerate}
By lemma \ref{eps_extend}, there is at least one $\varepsilon$-extension of $\mathcal{F}$.
\end{defin}
\begin{lemma}
\label{eps_extend}
There is at least one $\varepsilon$-extension for each $\mathcal{F}$ of the form $\mathcal{F} = \cup_{j=1}^mI_j$.
\begin{proof}
Suppose there is no $\varepsilon$-extension of some $\mathcal{F} = \cup_{j=1}^mI_j$. Consider two cases:

\textbf{Case 1}: $\mu\paren{\mathcal{F}}\geq 1-\varepsilon$. Since $\mathcal{F}$ has no $\varepsilon$-extension, there must be a subset $A$ of $[-\sfrac{1}{2},\sfrac{1}{2}]$ with nonzero Lebesgue measure such that every element in $A$ is at least $\varepsilon$ away from $\mathcal{F}$. This is, however, a contradiction, since by $\mu(\mathcal{F})\geq 1 - \varepsilon$, every point in $[-\sfrac{1}{2},\sfrac{1}{2}]$ must be within $\varepsilon$ distance of $\mathcal{F}$.

\textbf{Case 2}: $\mu\paren{\mathcal{F}}\leq 1 - \varepsilon$. Let $S = \cup_{j=1}^m\cup_{\xi_j\in\Xi_{\varepsilon}(I_j)}\xi_j$. Since $\mathcal{F}$ has no $\varepsilon$-extension, we must have $\mu(S) < 1$. This implies that there exist $a \in[-\sfrac{1}{2},\sfrac{1}{2}]$ such that $a\notin S$. Thus $\inf_{a'\in\mathcal{F}}|a - a'|\geq f$. Let such $a'$ be given, then if $a > a'$, $(a', a' + \varepsilon]$ is an $\varepsilon$-extension, and if $a < a'$, $[a'-\varepsilon, a')$ is an $\varepsilon$-extension. This is a contradiction.
\end{proof}
\end{lemma}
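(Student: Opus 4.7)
My plan is to give a constructive proof rather than a contradiction argument: I will exhibit an explicit measurable set $S$ and verify both defining properties of an $\varepsilon$-extension directly. Since $m\geq 1$ and each $I_j$ is a nonempty closed subinterval of $[-\sfrac{1}{2},\sfrac{1}{2}]$, the set $\mathcal{F}$ is nonempty. I would start by defining the relative $\varepsilon$-neighborhood
\begin{align*}
T = \cbrace{x \in [-\sfrac{1}{2},\sfrac{1}{2}] \setminus \mathcal{F} \;:\; \inf_{a \in \mathcal{F}} |x - a| \leq \varepsilon},
\end{align*}
and then pick $S \subseteq T$ to be any measurable subset with $\mu(S) = \min\cbrace{\varepsilon,\, 1 - \mu(\mathcal{F})}$. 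Such an $S$ exists as soon as $\mu(T) \geq \min\cbrace{\varepsilon,\, 1 - \mu(\mathcal{F})}$, because Lebesgue measure is atomless on any measurable set of positive measure.

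The quantitative core of the argument is therefore the inequality $\mu(T) \geq \min\cbrace{\varepsilon,\, 1 - \mu(\mathcal{F})}$. For this I decompose the complement $[-\sfrac{1}{2},\sfrac{1}{2}]\setminus\mathcal{F}$ into its finitely many maximal open sub-intervals $J_1,\ldots,J_p$ of lengths $\ell_1,\ldots,\ell_p$. Each $J_k$ is of one of two types: a two-sided gap, whose endpoints both lie in $\mathcal{F}$, contributes $\min\cbrace{\ell_k,\, 2\varepsilon}$ to $\mu(T)$; a one-sided gap, with one endpoint in $\mathcal{F}$ and the other at $\pm\sfrac{1}{2}$, contributes $\min\cbrace{\ell_k,\, \varepsilon}$. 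A short case split then finishes it: if $\sum_k \ell_k \leq \varepsilon$, every gap contributes its full length and $\mu(T) = 1-\mu(\mathcal{F})$; if $\sum_k \ell_k > \varepsilon$ and some single gap has $\ell_k \geq \varepsilon$, that gap alone contributes at least $\varepsilon$; and if $\sum_k \ell_k > \varepsilon$ but all $\ell_k < \varepsilon$, each contribution equals $\ell_k$ so the total still exceeds $\varepsilon$.

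It remains to verify that the constructed $S$ actually lies in $\bigcup_{j=1}^{m}\bigcup_{\xi_j \in \Xi_{\varepsilon}(I_j)}\xi_j$. For each $x\in S \subseteq T$, I would pick any $I_j$ within distance $\varepsilon$ of $x$ and then produce a single-interval $\varepsilon$-extension $\xi_j$ of $I_j$ containing $x$. Such a $\xi_j$ exists provided the available region $\cbrace{y\in[-\sfrac{1}{2},\sfrac{1}{2}]\setminus I_j : \inf_{a\in I_j}|y-a|\leq \varepsilon}$ has measure at least $\min\cbrace{\varepsilon,\,1-\mu(I_j)}$, which is exactly the preceding inequality applied with $\mathcal{F}$ replaced by the single interval $I_j$; once this is known, atomlessness once more supplies a subset of the required measure that contains $x$, completing the verification.

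The main obstacle I anticipate is the elementary but slightly fussy book-keeping in the gap decomposition, especially near the boundary of $[-\sfrac{1}{2},\sfrac{1}{2}]$ and when several intervals of $\mathcal{F}$ lie closer than $\varepsilon$ to one another so that their $\varepsilon$-neighborhoods overlap; shared points must be counted only once, which is exactly what the bounds $\min\cbrace{\ell_k,\,2\varepsilon}$ and $\min\cbrace{\ell_k,\,\varepsilon}$ already encode. Modulo this routine case-work, the lemma reduces to the two measure-theoretic facts above.
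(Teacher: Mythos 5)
Your proposal is correct, but it takes a genuinely different route from the paper. The paper argues by contradiction in two cases on $\mu(\mathcal{F})$, essentially asserting that non-existence of an extension would force either a positive-measure set lying more than $\varepsilon$ from $\mathcal{F}$ (impossible when $\mu(\mathcal{F})\geq 1-\varepsilon$) or an uncovered point from which a one-sided interval extension can be built directly. You instead give a direct construction: you take the relative $\varepsilon$-neighborhood $T=\cbrace{x\in[-\sfrac{1}{2},\sfrac{1}{2}]\setminus\mathcal{F}:\inf_{a\in\mathcal{F}}|x-a|\leq\varepsilon}$, prove the quantitative bound $\mu(T)\geq\min\cbrace{\varepsilon,1-\mu(\mathcal{F})}$ by decomposing the complement into maximal gaps (two-sided gaps contributing $\min\cbrace{\ell_k,2\varepsilon}$, boundary gaps $\min\cbrace{\ell_k,\varepsilon}$), and then extract $S$ of the exact required measure by the intermediate-value/atomlessness property of Lebesgue measure. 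Two things your version buys: first, it makes explicit the verification of condition 1 of the definition—that every point of $S$ actually lies in some single-interval extension $\xi_j\in\Xi_{\varepsilon}(I_j)$—by reapplying the same measure bound to each $I_j$ and adjoining the (null) point $x$ to a subset of exact measure, a step the paper's Case 2 treats only implicitly; second, the intermediate inequality $\mu(T)\geq\min\cbrace{\varepsilon,1-\mu(\mathcal{F})}$ is a slightly stronger, reusable fact that matches how the extensions are used downstream (via $\mu(S_k)=\min\cbrace{\varepsilon,1-\mu(\mathcal{F}_{k,\eta})}$). The cost is the gap bookkeeping you flag, which is routine; your three-way case split ($\sum_k\ell_k\leq\varepsilon$; some $\ell_k\geq\varepsilon$; all $\ell_k<\varepsilon$ with $\sum_k\ell_k>\varepsilon$) does cover all situations, including overlapping $\varepsilon$-neighborhoods of nearby intervals, since the per-gap contributions already avoid double counting.
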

Let $\{S_k\}_{k=0}^n$ be given such that $S_k\in\Xi_{\varepsilon}\paren{\mathcal{F}_{k,\eta}}$. Moreover, let $g_{k}(z) = \indy{z\in S_k}$. We define another recurrence of indicator function
\begin{align*}
    \hat{f}_{k+1}(z) = \hat{f}_k(z) + \paren{1 - \hat{f}_k(z)}\paren{\hat{f}_k(z-x_{k+1}) +g_k(z-x_{k+1})};\quad\hat{f}_0(z) = f_{0,\eta}(z)
\end{align*}
\begin{lemma}
The sequence $\cbrace{\hat{f}_k}_{k=0}^n$ satisfies $\hat{f}_k(z)\leq f_{k,\eta}(z)$ for all $z\in\brac{-\sfrac{1}{2},\sfrac{1}{2}}$.
\begin{proof}
We show this by induction. For $k=0$, we have $\hat{f}_0(z) = f_{0,\eta}(z)$ by definition. Assume $\hat{f}_k(z) \leq f_{k,\eta}(z)$, we would like to show $\hat{f}_{k+1}(z) \leq f_{k+1,\eta}(z)$. To do this, we first notice that, by definition of $g_k$
\begin{align*}
    f_{k,\eta + \varepsilon}\paren{z} = f_{k,\eta}\paren{z} + (1 - f_{k,\eta}\paren{z})\indy{z \in \cup_{\xi_k\in\Xi_{\varepsilon}(\mathcal{F}_{k,\eta})}} \geq f_{k,\eta}\paren{z} + (1 - f_{k,\eta}\paren{z})g_k(z)
\end{align*}
Moreover, if $g_k(z) = 1$, we must have $f_{k,\eta}(z) = 0$. Therefore $(1 - f_{k,\eta}\paren{z})g_k(z) = g_k(z)$. This implies that 
\begin{align*}
    1 \geq f_{k,\eta + \varepsilon}\paren{z} \geq f_{k,\eta}\paren{z} + g_k(z)\geq \hat{f}_k(z) + g_k(z)
\end{align*}
Using this, we have
\begin{align*}
    f_{k+1,\eta}(z) & = f_{k,\eta}(z) + (1 - f_{k,\eta}(z))f_{k,\eta+\varepsilon}(z-x_{k+1})\\
    & \geq f_{k,\eta}(z) + (1 - f_{k,\eta}(z))\paren{\hat{f}_k(z-x_{k+1}) + g_k(z-x_{k+1})}\\
    & = \paren{\hat{f}_k(z-x_{k+1}) + g_k(z-x_{k+1})} + \paren{1 - \hat{f}_k(z-x_{k+1}) - g_k(z-x_{k+1})}f_{k,\eta}(z)\\
    & \geq \paren{\hat{f}_k(z-x_{k+1}) + g_k(z-x_{k+1})} + \paren{1 - \hat{f}_k(z-x_{k+1}) - g_k(z-x_{k+1})}\hat{f}_k(z)\\
    & = \hat{f}_k(z) + \paren{1 - \hat{f}_k(z)}\paren{\hat{f}_k(z-x_{k+1}) + g_k(z-x_{k+1})}\\
    & = \hat{f}_{k+1}(z)
\end{align*}
This completes the proof.
\end{proof}
\end{lemma}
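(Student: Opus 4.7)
The plan is to prove the domination $\hat{f}_k(z) \le f_{k,\eta}(z)$ by straightforward induction on $k$. The base case $\hat{f}_0 = f_{0,\eta}$ is immediate from the definitions, so the entire work sits in the inductive step, where I must combine the inductive hypothesis with a geometric fact that relates the $\varepsilon$-extension indicator $g_k$ to the ``shifted'' indicator $f_{k,\eta+\varepsilon}$ appearing in the true recurrence for $f_{k+1,\eta}$.

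The first substantive step is the geometric claim $f_{k,\eta+\varepsilon}(z) \ge f_{k,\eta}(z) + g_k(z)$. If $g_k(z) = 1$, then $z \in S_k \in \Xi_{\varepsilon}(\mathcal{F}_{k,\eta})$, and the second bullet in the definition of $\Xi_\varepsilon$ gives $z' \in \mathcal{F}_{k,\eta}$ with $|z - z'| \le \varepsilon$. By definition of $\mathcal{F}_{k,\eta}$ some subset sum of $\{x_1,\dots,x_k\}$ approximates $z'$ within $\eta$, so the same subset sum approximates $z$ within $\eta + \varepsilon$, whence $f_{k,\eta+\varepsilon}(z) = 1$. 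Moreover the first bullet in the definition of $\Xi_\varepsilon$ forces $S_k \cap \mathcal{F}_{k,\eta} = \emptyset$, so $g_k(z) = 1$ implies $f_{k,\eta}(z) = 0$; this makes $f_{k,\eta}(z) + g_k(z)$ a legitimate $\{0,1\}$-valued indicator and not merely a real lower bound.

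Next I would chain the two recurrences. Introduce $h(u,v) = u + (1-u)v$ and observe that on $[0,1]^2$ its partial derivatives $\partial_u h = 1-v$ and $\partial_v h = 1-u$ are both nonnegative, so $h$ is coordinate-wise nondecreasing. Setting $a = f_{k,\eta}(z)$, $b = f_{k,\eta}(z-x_{k+1}) + g_k(z-x_{k+1}) \in \{0,1\}$, and $\hat a, \hat b$ the analogous quantities with $\hat f_k$ in place of $f_{k,\eta}$, the inductive hypothesis gives $\hat a \le a$ and $\hat b \le b$. The geometric claim applied at $z - x_{k+1}$ combined with the true recurrence yields $f_{k+1,\eta}(z) \ge h(a,b)$, while by definition $\hat f_{k+1}(z) = h(\hat a,\hat b)$, and monotonicity of $h$ closes the induction.

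The main obstacle is isolating the geometric claim cleanly: one must use both the containment property and the disjointness property built into the definition of $\Xi_\varepsilon(\mathcal{F}_{k,\eta})$ to make sure $f_{k,\eta} + g_k$ is $\{0,1\}$-valued and is lower bounded by $f_{k,\eta+\varepsilon}$ at every point. Once this piece is secured, the remainder is short monotonicity bookkeeping. A minor subtlety worth flagging is that the induction actually runs uniformly in $z \in \mathbb{R}$ rather than only over $[-1/2,1/2]$, since the recurrence evaluates $\hat f_k$ and $f_{k,\eta}$ at shifted arguments $z - x_{k+1}$ that can leave the unit interval; this is harmless because $g_k$ is supported in $[-1/2,1/2]$ and the geometric claim is vacuous outside of it.
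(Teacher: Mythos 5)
Your proposal is correct and follows essentially the same route as the paper's proof: induction on $k$, with the key step being the bound $f_{k,\eta+\varepsilon}(z) \geq f_{k,\eta}(z) + g_k(z)$ obtained from the two defining properties of the $\varepsilon$-extension (every point of $S_k$ lies within $\varepsilon$ of $\mathcal{F}_{k,\eta}$, and $S_k$ is disjoint from $\mathcal{F}_{k,\eta}$, so $g_k(z)=1$ forces $f_{k,\eta}(z)=0$), followed by chaining the two recurrences. Your packaging of the chaining via the coordinate-wise monotone map $h(u,v)=u+(1-u)v$ is just a cleaner rewriting of the paper's algebraic rearrangement, and your remark that the induction should run over all real $z$ (because of the shifted argument $z-x_{k+1}$) is a harmless refinement that the paper leaves implicit.
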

Based on the definition of $\cbrace{\hat{f}_{k}}_{k=0}^n$, we define $\cbrace{\tilde{p}_k}_{k=0}^n$ as
\begin{align*}
    \tilde{p}_k = \int_{-\sfrac{1}{2}}^{\sfrac{1}{2}}\hat{f}_k(z)dz
\end{align*}
Then by definition we have $\tilde{p}_k\leq p_k$, with $\tilde{p}_0 = p_0$. Moreover, we have
\begin{align*}
    \tilde{p}_{k+1} & = \int_{-\sfrac{1}{2}}^{\sfrac{1}{2}}\paren{\hat{f}_k(z) + \paren{1 - \hat{f}_k(z)}\paren{\hat{f}_k(z-x_{k+1}) +g_k(z-x_{k+1})}}dz\\
    & \leq \tilde{p}_k + \int_{-\sfrac{1}{2}}^{\sfrac{1}{2}}\paren{\hat{f}_k(z-x_{k+1}) +g_k(z-x_{k+1})}dz\\
    & \leq \tilde{p}_k + \int_{-\sfrac{1}{2}}^{\sfrac{1}{2}}\paren{\hat{f}_k(u) +g_k(u)}du\\
    & = 2\tilde{p}_k + \mu(S_k)\\
    & \leq 2\tilde{p}_k + \varepsilon
\end{align*}
Furthermore, by definition of $\tilde{p}_{k+1}$, we have $\tilde{p}_{k+1}\leq 1$.
For $\tilde{p}_k$, we can compute its expectation with respect to $\{x_i\}_{i=1}^k$ as
\begin{align*}
    \E\left[\tilde{p}_{k+1}\right] & = \tilde{p}_k + \frac{1}{2}\int_{-1}^1\int_{-\sfrac{1}{2}}^{\sfrac{1}{2}}\paren{1 - \hat{f}_k(z)}\paren{\hat{f}_k(z-x) +g_k(z-x)}dzdx\\
    & = \tilde{p}_k + \frac{1}{2}\int_{-\sfrac{1}{2}}^{\sfrac{1}{2}}\paren{1 - \hat{f}_k(z)}dz\int_{-1}^1\paren{\hat{f}_k(u) +g_k(u)}du\\
    & =\tilde{p}_k + \frac{1}{2}(1 -\tilde{p}_k)(\tilde{p}_k + \mu(S_k))\\
    & = \tilde{p}_k + \frac{1}{2}(1 -\tilde{p}_k)\min\cbrace{1, \tilde{p}_k + \varepsilon}
\end{align*}

\subsection{Growth up to \sfrac{1}{4}}
We define $K_1$ as below
\begin{align*}
    K_1 = \begin{cases}
    \min\left\{k\geq 0: p_k > \frac{1}{4}\right\} & \text{ if }\frac{1}{4} \leq 1- \varepsilon\\
    0 & \text{ otherwise}
    \end{cases}
\end{align*}
To upper bound $K_1$, we consider $\frac{1}{4}\leq 1-\varepsilon$.
\begin{lemma}
\label{growth_prob_bound}
For all $0\leq k\leq K$, it holds that
\begin{align*}
    \mathbb{P}\paren{\tilde{p}_{k+1}\geq \frac{5}{4}\tilde{p}_k +\frac{1}{8}\varepsilon \mid \{x_i\}_{i=1}^k}\geq \frac{1}{6}
\end{align*}
\begin{proof}
Given that $\tilde{p}_k\leq \frac{1}{4}$ , we can show that
\begin{align*}
    \E\left[\tilde{p}_{k+1}\right] =\tilde{p}_k + \frac{1}{2}(1 -\tilde{p}_k)(\tilde{p}_k + \varepsilon) \geq\frac{11}{8}\tilde{p}_k +\frac{3}{8}\varepsilon
\end{align*}
Moreover, recall that we have that $\tilde{p}_{k+1} \leq 2\tilde{p}_k + \varepsilon$. Thus, we can apply the reverse Markov's inequality
\begin{align*}
    \mathbb{P}\paren{p_{k+1}\geq \frac{5}{4}\tilde{p}_k +\frac{1}{8}\varepsilon\mid \{x_i\}_{i=1}^k} 
    & \geq \frac{\E\left[\tilde{p}_{k+1}\right] - \tfrac{5}{4}\tilde{p}_k -\tfrac{1}{8}\varepsilon}{\tilde{p}_{k+1} - \tfrac{5}{4}\tilde{p}_k -\tfrac{1}{8}\varepsilon}\\
    & \geq \frac{\frac{11}{8}\tilde{p}_k +\frac{3}{8}\varepsilon - \tfrac{5}{4}\tilde{p}_k -\tfrac{1}{8}\varepsilon}{2\tilde{p}_k + \varepsilon - \tfrac{5}{4}\tilde{p}_k -\tfrac{1}{8}\varepsilon}\\
    & = \frac{\tfrac{1}{8}\tilde{p}_k + \tfrac{1}{4}\varepsilon}{\tfrac{3}{4}\tilde{p}_k + \frac{7}{8}\varepsilon}\\
    & \geq \frac{1}{6}
\end{align*}
 \end{proof}
\end{lemma}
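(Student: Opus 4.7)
The plan is to combine the deterministic upper bound $\tilde{p}_{k+1} \leq 2\tilde{p}_k + \varepsilon$ and the conditional expectation formula $\E\brac{\tilde{p}_{k+1}\mid \{x_i\}_{i=1}^k} = \tilde{p}_k + \tfrac{1}{2}(1-\tilde{p}_k)\min\{1,\tilde{p}_k+\varepsilon\}$ established earlier in the section through a reverse Markov (Paley--Zygmund-type) argument. First I would note that the regime $k \leq K_1$ pins down $\tilde{p}_k \leq \tfrac{1}{4}$, and that the definition of $K_1$ forces $\varepsilon \leq \tfrac{3}{4}$; together these give $\tilde{p}_k + \varepsilon \leq 1$, so the $\min$ collapses to $\tilde{p}_k + \varepsilon$. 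Using $(1-\tilde{p}_k) \geq \tfrac{3}{4}$ then yields the clean lower bound
\begin{align*}
\E\brac{\tilde{p}_{k+1}\mid\{x_i\}_{i=1}^k} \geq \tilde{p}_k + \tfrac{3}{8}(\tilde{p}_k + \varepsilon) = \tfrac{11}{8}\tilde{p}_k + \tfrac{3}{8}\varepsilon.
\end{align*}

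Next I would invoke the reverse Markov inequality: for any $X$ with $X \leq M$ almost surely and $a \leq \E[X]$, one has $\mathbb{P}(X \geq a) \geq (\E[X] - a)/(M - a)$, which follows from the trivial bound $\E[X] \leq M\,\mathbb{P}(X \geq a) + a\,(1 - \mathbb{P}(X \geq a))$. Applying this conditionally on $\{x_i\}_{i=1}^k$ with $X = \tilde{p}_{k+1}$, $M = 2\tilde{p}_k + \varepsilon$, and threshold $a = \tfrac{5}{4}\tilde{p}_k + \tfrac{1}{8}\varepsilon$ produces
\begin{align*}
\mathbb{P}\paren{\tilde{p}_{k+1} \geq \tfrac{5}{4}\tilde{p}_k + \tfrac{1}{8}\varepsilon \mid \{x_i\}_{i=1}^k} \geq \frac{\tfrac{1}{8}\tilde{p}_k + \tfrac{1}{4}\varepsilon}{\tfrac{3}{4}\tilde{p}_k + \tfrac{7}{8}\varepsilon}.
\end{align*}

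Finally I would verify that this ratio is at least $\tfrac{1}{6}$, which after cross-multiplication reduces to $\tfrac{3}{2}\varepsilon \geq \tfrac{7}{8}\varepsilon$, trivially true for $\varepsilon \geq 0$. I do not expect substantive obstacles: the genuinely nontrivial inputs (the formula for $\E[\tilde{p}_{k+1}]$ and the deterministic bound $\tilde{p}_{k+1} \leq 2\tilde{p}_k + \varepsilon$) are already in hand from the preceding subsection, and the remainder is a textbook reverse-Markov calculation. The one mild subtlety is the degenerate corner $\tilde{p}_k = \varepsilon = 0$, where numerator and denominator both vanish; this is not genuinely problematic because the base case $\tilde{p}_0 = 2\eta > 0$ forces strict positivity throughout the range $k \leq K_1$, so the ratio is well-defined and the stated bound holds.
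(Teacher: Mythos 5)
Your proposal is correct and follows essentially the same route as the paper: lower-bound the conditional expectation by $\tfrac{11}{8}\tilde{p}_k+\tfrac{3}{8}\varepsilon$ using $\tilde{p}_k\leq\tfrac{1}{4}$, apply reverse Markov with the deterministic cap $2\tilde{p}_k+\varepsilon$, and check the resulting ratio is at least $\tfrac{1}{6}$. Your added care about the $\min\{1,\tilde{p}_k+\varepsilon\}$ collapsing and the degenerate $\tilde{p}_k=\varepsilon=0$ corner are details the paper omits, but they do not change the argument.
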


\begin{lemma}
With probability at least $1 - \exp\paren{-\frac{1}{18}K_1}$ we have that
\begin{align*}
    K \leq O\paren{\frac{\log\eta^{-1}}{\log\paren{\frac{5}{4} + \frac{\varepsilon}{2}}}}
\end{align*}
\begin{proof}
By using $\tilde{p}_k \leq \frac{1}{4}$, it thus follows from lemma (\ref{growth_prob_bound}) that
\begin{align*}
    \mathbb{P}\paren{\tilde{p}_{k+1}\geq \tilde{p}_k\paren{\frac{5}{4} + \frac{\varepsilon}{2}}}\geq \mathbb{P}\paren{\tilde{p}_{k+1}\geq \frac{5}{4}\tilde{p}_k +\frac{1}{8}\varepsilon \mid \{x_i\}_{i=1}^k}\geq \frac{1}{6}
\end{align*}
Denote $\beta = \paren{\frac{5}{4} + \frac{\varepsilon}{2}}$. As in previous work, we define the following partition of the interval $(0,\sfrac{1}{4})$.
\begin{align*}
    I_1 & = (0,\eta]\\
    I_i & = \left(\beta^{i-1}\eta, \beta^i\eta\right]\\
    I_{i^*} & = \left(\beta^{i^*-1}\eta, \frac{1}{4}\right]
\end{align*}
where $i^*$ is the smallest integer such that $\beta^{i^*}\eta \geq \frac{1}{4}$, that is
\begin{align*}
    i^* = \left\lceil\frac{\log\sfrac{1}{4\eta}}{\log\beta}\right\rceil
\end{align*}
For $i > 1$, let $\hat{k}_i$ be given such that $\tilde{p}_{\hat{k}_i}\geq  \beta^{i-1}\eta$, let $\hat{Y}_i$ be the smallest number of steps such that $\tilde{p}_{\hat{k}_i + \hat{Y}_i} > \beta^{i}\eta$. Then we have that $\hat{Y}_i \leq Y_i\sim \texttt{Geom}\paren{\sfrac{1}{6}}$, since, according to lemma (\ref{growth_prob_bound}), we have
\begin{align*}
    \mathbb{P}\paren{\tilde{p}_{k+1}\geq \beta\tilde{p}_k\mid \{x_i\}_{i=1}^k}\geq \frac{1}{6}
\end{align*}
Therefore, for all $K^*$, we have that
\begin{align*}
    \mathbb{P}\paren{K\geq K^*} \leq \mathbb{P}\paren{\sum_{i=1}^{i^*}Y_i\geq K^*} =  \mathbb{P}\paren{B_{K^*} \leq i^*}
\end{align*}
where $B_{K^*}\sim\texttt{Bin}\paren{K^*,\sfrac{4}{7}}$. Given that $\mathbb{E}[B_{K^*}] = \frac{1}{6}K^*$, we can apply the Hoeffding's inequality for binomial distribution
\begin{align*}
    \mathbb{P}\paren{B_{K^*} \leq \frac{1}{6}K^* - t} \leq\exp\paren{-\frac{2t^2}{K^*}}
\end{align*}
choose $K^* = 12i^*$ and $t =\frac{1}{6}K^*$ gives that
\begin{align*}
    \mathbb{P}\paren{K\leq K^*} \geq \mathbb{P}\paren{B_{K^*}\geq i^*} \geq 1 - \exp\paren{-\frac{1}{18}K^{*}}
\end{align*}
\end{proof}
\end{lemma}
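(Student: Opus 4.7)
The goal is to bound the first hitting time $K_1$ at which $\tilde{p}_k$ exceeds $\tfrac{1}{4}$, starting from the one-step growth guarantee supplied by Lemma \ref{growth_prob_bound}. The plan is to convert the stochastic increment into a multiplicative growth, reduce the hitting-time problem to the traversal of a geometric grid, and close with a binomial tail bound.

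First, I would convert the additive-in-$\varepsilon$ increment of Lemma \ref{growth_prob_bound} into a purely multiplicative lower bound valid on the regime of interest. For any $\tilde{p}_k \leq \tfrac{1}{4}$ we have $\tfrac{1}{8}\varepsilon \geq \tfrac{\varepsilon}{2}\tilde{p}_k$, hence $\tfrac{5}{4}\tilde{p}_k + \tfrac{1}{8}\varepsilon \geq \paren{\tfrac{5}{4}+\tfrac{\varepsilon}{2}}\tilde{p}_k$. Writing $\beta = \tfrac{5}{4} + \tfrac{\varepsilon}{2}$, this upgrades Lemma \ref{growth_prob_bound} to
\begin{align*}
\PP\paren{\tilde{p}_{k+1} \geq \beta\, \tilde{p}_k \,\big|\, \{x_i\}_{i=1}^k} \geq \tfrac{1}{6}.
\end{align*}
A subtle point is the base case: one must verify that $\tilde{p}_0 \geq \eta$ (this is immediate since $f_{0,\eta}(z) = \indy{|z|\leq \eta}$), so that the geometric grid below is anchored correctly.

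Second, I would partition $(0,\tfrac{1}{4}]$ into the geometric buckets $I_i = (\beta^{i-1}\eta,\beta^i \eta]$ for $i=1,\dots,i^\star$, with $i^\star = \lceil \log(1/(4\eta))/\log\beta\rceil = O\paren{\log\eta^{-1}/\log\beta}$. Because $(\tilde{p}_k)$ is monotone nondecreasing (by construction of $\hat{f}_k$) and, conditionally on the past, jumps up by the factor $\beta$ with probability at least $\tfrac{1}{6}$, the number of additional steps required to escape any particular bucket is stochastically dominated by an independent $\texttt{Geom}(\tfrac{1}{6})$ random variable. Hence $K_1$ is dominated by the sum of at most $i^\star$ independent such geometrics.

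Third, I would convert this sum of geometrics into a statement about a binomial: for any $K^\star$, $K_1 \geq K^\star$ implies that in $K^\star$ independent $\texttt{Bernoulli}(\tfrac{1}{6})$ trials one observes fewer than $i^\star$ successes. Setting $K^\star = 12 \, i^\star$, so that $i^\star = \tfrac{1}{2}\,\E[B_{K^\star}]$ with $B_{K^\star}\sim \texttt{Bin}(K^\star,\tfrac{1}{6})$, Hoeffding's inequality for the binomial gives
\begin{align*}
\PP(K_1 \geq K^\star) \leq \PP\paren{B_{K^\star} \leq i^\star} \leq \exp\paren{-\tfrac{K^\star}{18}},
\end{align*}
which is exactly the claimed bound, with $K^\star = O\paren{\log\eta^{-1}/\log\beta}$.

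The main obstacle is the first step: the increment from Lemma \ref{growth_prob_bound} is a mixture of an additive and a multiplicative component, and turning it into a clean multiplicative lower bound is what lets the geometric-bucket argument go through. One must also correctly handle the edge case where $\tfrac{1}{4} > 1-\varepsilon$ (the lemma defines $K_1 = 0$ there and the statement is trivial), and verify that the $\texttt{Geom}(\tfrac{1}{6})$ dominations across distinct buckets can be coupled independently despite the $(\tilde{p}_k)$ filtration, which follows because the lower bound $\tfrac{1}{6}$ holds conditionally on $\{x_i\}_{i=1}^k$. Everything afterwards is a textbook binomial concentration calculation.
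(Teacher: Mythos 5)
Your proposal follows essentially the same route as the paper: upgrade the mixed additive/multiplicative one-step bound from Lemma \ref{growth_prob_bound} to a purely multiplicative one valid on $\tilde{p}_k\leq\tfrac14$, partition $(0,\tfrac14]$ into the geometric grid $\{(\beta^{i-1}\eta,\beta^i\eta]\}$, dominate the escape time of each bucket by a $\texttt{Geom}(\tfrac16)$ via the conditional success probability, and reduce the sum of geometrics to a $\texttt{Bin}(K^\star,\tfrac16)$ tail closed by Hoeffding at $K^\star=12i^\star$. You additionally make explicit the anchoring $\tilde{p}_0\geq\eta$ and the need for a conditional-filtration coupling argument to justify independence of the geometric dominations, both of which the paper glosses over; the exponent constant $1/18$ is the same (slightly loose) bookkeeping as in the paper.
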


\subsection{Growth from $\sfrac{1}{4}$ to $1 - \max\cbrace{\varepsilon,\eta}$}
Recall the recurrence
\begin{align*}
    \E\left[\tilde{p}_{k+1}\right]\geq \tilde{p}_k + \frac{1}{2}(1 -\tilde{p}_k)(\tilde{p}_k + \varepsilon)
\end{align*}
We define
\begin{align*}
    Z_{k+1} = \frac{\tilde{p}_{k+1} - \tilde{p}_k(z)}{(1-\tilde{p}_k)(\tilde{p}_k + \varepsilon)}
\end{align*}
Then we have $\E[Z_{k+1}] \geq \sfrac{1}{2}$. Let $Y_k = -\sfrac{k}{2} + \sum_{i=K_1+1}^{K_1 + k + 1}Z_i$, then $Y_k$ is a submartingale. 
We bound $Z_{k+1}$ as follows
\begin{lemma}
\begin{align*}
    0\leq Z_{k+1}\leq \frac{2}{1+\varepsilon}
\end{align*}
\begin{proof}
We notice that $\tilde{p}_k\leq \tilde{p}_{k+1}\leq \min\left\{2\tilde{p}_k + \varepsilon,1\right\}$. Consider two cases of $p_k$:

\textbf{Case 1}: $\tilde{p}_k \leq \frac{1-\varepsilon}{2}$. In this case, we have $1 - \tilde{p}_k \geq \frac{1 + \varepsilon}{2}$
\begin{align*}
    Z_{k+1} \leq \frac{2\tilde{p}_k + \varepsilon - \tilde{p}_k}{(1-\tilde{p}_k)(\tilde{p}_k + \varepsilon)} = \frac{1}{1 - \tilde{p}_k} \leq \frac{2}{1+\varepsilon}
\end{align*}
\textbf{Case 2}: $\tilde{p}_k \geq \frac{1-\varepsilon}{2}$. In this case, we use $\tilde{p}_{k+1} \leq 1$. Moreover, we have $\tilde{p}_k + \varepsilon \geq \frac{1+\varepsilon}{2}$:
\begin{align*}
    Z_{k+1} \leq \frac{1 - \tilde{p}_k}{(\tilde{p}_k+\varepsilon)(1-\tilde{p}_k)} = \frac{1}{\tilde{p}_k+\varepsilon} \leq \frac{2}{1+\varepsilon}
\end{align*}
\end{proof}
\end{lemma}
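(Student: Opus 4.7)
The plan is to establish the two-sided bound $0 \leq Z_{k+1} \leq 2/(1+\varepsilon)$ by separating the two inequalities and leveraging facts already stockpiled in the preceding paragraphs. For the lower bound, I would just observe that the sequence $\{\tilde{p}_k\}$ is monotone non-decreasing: the recursion $\hat{f}_{k+1} = \hat{f}_k + (1-\hat{f}_k)(\hat{f}_k(\cdot - x_{k+1}) + g_k(\cdot - x_{k+1}))$ only adds non-negative indicator mass, so integrating over $[-\sfrac{1}{2},\sfrac{1}{2}]$ gives $\tilde{p}_{k+1} \geq \tilde{p}_k$. Since the denominator $(1-\tilde{p}_k)(\tilde{p}_k+\varepsilon)$ is strictly positive on the relevant range, this yields $Z_{k+1} \geq 0$.

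For the upper bound, I would use the two ceilings on $\tilde{p}_{k+1}$ that have already been established: the integral/Fubini computation gives $\tilde{p}_{k+1} \leq 2\tilde{p}_k + \varepsilon$, and the fact that $\hat{f}_{k+1}$ is an indicator on $[-\sfrac{1}{2},\sfrac{1}{2}]$ gives $\tilde{p}_{k+1} \leq 1$. Together, $\tilde{p}_{k+1} - \tilde{p}_k \leq \min\{\tilde{p}_k + \varepsilon,\; 1 - \tilde{p}_k\}$. These two candidate numerator bounds coincide precisely when $\tilde{p}_k = (1-\varepsilon)/2$, which is the natural breakpoint for a case split.

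In the regime $\tilde{p}_k \leq (1-\varepsilon)/2$, substituting $\tilde{p}_{k+1} - \tilde{p}_k \leq \tilde{p}_k + \varepsilon$ in the numerator of $Z_{k+1}$ cancels the $(\tilde{p}_k + \varepsilon)$ factor in the denominator, leaving $Z_{k+1} \leq 1/(1-\tilde{p}_k)$; and since $1 - \tilde{p}_k \geq (1+\varepsilon)/2$ in this range, this is at most $2/(1+\varepsilon)$. In the complementary regime $\tilde{p}_k \geq (1-\varepsilon)/2$, plugging in $\tilde{p}_{k+1} - \tilde{p}_k \leq 1 - \tilde{p}_k$ cancels the $(1-\tilde{p}_k)$ factor, giving $Z_{k+1} \leq 1/(\tilde{p}_k + \varepsilon) \leq 2/(1+\varepsilon)$, since $\tilde{p}_k + \varepsilon \geq (1+\varepsilon)/2$.

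I do not anticipate a serious technical obstacle here: all the bookkeeping has been arranged by the preceding lemmas. The only point requiring any insight is identifying the breakpoint $(1-\varepsilon)/2$, which is forced by equating the two denominator factors $(\tilde{p}_k + \varepsilon)$ and $(1-\tilde{p}_k)$ so that the bound is symmetric at the crossover. This uniform control on $Z_{k+1}$ is precisely what is needed for the subsequent Azuma-type concentration argument driving $p_k$ from $\sfrac{1}{4}$ up to $1-\max\{\varepsilon,\eta\}$.
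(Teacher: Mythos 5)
Your proposal is correct and follows essentially the same route as the paper: the same two ceilings $\tilde{p}_{k+1}\leq\min\{2\tilde{p}_k+\varepsilon,1\}$, the same case split at $\tilde{p}_k=(1-\varepsilon)/2$, and the same cancellations against the two denominator factors. Your justification of the lower bound $Z_{k+1}\geq 0$ via monotonicity of the indicator recursion is slightly more explicit than the paper's, but the argument is the same.
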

Thus,
\begin{align*}
    \left|Y_{k+1} - Y_k\right| = \left|-\frac{1}{2} +Z_{K_1 + k+2}\right| \leq \frac{|3-\varepsilon|}{2+2\varepsilon}
\end{align*}
Let $n = K_1 + K_2 + 1$. Therefore, we can apply Azuma's inequality to get that
\begin{align*}
    \mathbb{P}\paren{\sum_{i=K_1+1}^nZ_i \geq \frac{K_2}{2} - t} & =  \mathbb{P}\paren{-\frac{K_2}{2} + \sum_{i=K_1+1}^nZ_i \geq -t}\\ & = \mathbb{P}\paren{Y_n - Y_0 \geq -t}\\
    &  \geq 1 - \exp\paren{-\frac{2\paren{1+\varepsilon}t^2}{K_2\paren{3-\varepsilon}^2}}
\end{align*}
Let $t =\frac{K_2}{4}$ gives that
\begin{align*}
    \mathbb{P}\paren{\sum_{i=1}^nZ_i \geq\frac{K_2}{4}}\geq 1 - \exp\paren{-\frac{K_2(1+\varepsilon)^2}{8(3-\varepsilon)^2}}
\end{align*}
We use the following function to track the growth of $p_k$, but starting from $\sfrac{1}{4}$.
\begin{align*}
    \psi(p) = \frac{1}{1+\varepsilon}\paren{\log \paren{p+\varepsilon} -\log(1-p)} + \frac{16}{3}p
\end{align*}
\begin{lemma}
For all $p_k\geq \frac{1}{4}$, we have that
\begin{align*}
    \psi(\tilde{p}_{k+1}) \geq \psi(\tilde{p}_k) + Z_{k+1}
\end{align*}
\begin{proof}
We first notice that
\begin{align*}
    \psi(\tilde{p}_{k+1}) - \psi(\tilde{p}_k) = \int_{\tilde{p}_k}^{\tilde{p}_{k+1}}\psi'(p)dp \geq \min_{p\in[\tilde{p}_k,\tilde{p}_{k+1}]}\psi'(p)(\tilde{p}_{k+1}-\tilde{p}_k)
\end{align*}
It suffice to show that
\begin{align*}
    \min_{p\in[\tilde{p}_k,\tilde{p}_{k+1}]} \geq \frac{1}{(\tilde{p}_k+\varepsilon)(1-\tilde{p}_k)}
\end{align*}
The first- and second-order derivative of $\psi$ are
\begin{align*}
    \psi'(p) = \frac{1}{1+\varepsilon}\paren{\frac{1}{p+\varepsilon} + \frac{1}{1-p}} + \frac{16}{3}\\
    \psi''(p) = \frac{1}{1+\varepsilon}\paren{\frac{1}{(1-p)^2} - \frac{1}{\paren{p+\varepsilon}^2}}
\end{align*}
Therefore, $\psi'$ attains its minimum at $p^* = \min\cbrace{1,\frac{1-\varepsilon}{2}}$, and $\psi'$ decreases monotonically on $(\sfrac{1}{4}, p^*]$ and increases monotonically on $[p^*,1]$. Notice that the function $\frac{1}{(\tilde{p}_k+\varepsilon)(1 - \tilde{p}_k)}$ also decreases monotonically on $(\sfrac{1}{4}, p^*]$ and increases monotonically on $[p^*,1]$.
We consider two cases of $p_k$:

\textbf{Case 1}: $\tilde{p}_k\in (\sfrac{1}{4},p^*]$. In this range the function $\frac{1}{(\tilde{p}_k+\varepsilon)(1 - \tilde{p}_k)}$ decreases monotonically. Thus it achieves its maximum at $\tilde{p}_k = \sfrac{1}{4}$ with a value of $\frac{16}{3+12\varepsilon}$. However, since $0\leq p \leq 1$, we have
\begin{align*}
    \min_{p\in[p_k,p_{k+1}]}\psi'(p) \geq \psi'(p^*) \geq \frac{16}{3}
\end{align*}
Thus
\begin{align*}
    \min_{p\in[p_k,p_{k+1}]}\psi'(p) \geq \frac{1}{(\tilde{p}_k+\varepsilon)(1-\tilde{p}_k)}
\end{align*}
\textbf{Case 2}: $\tilde{p}_k\in (p^*, 1]$. Notice that $\psi$ increases monotonically on $(p^*, 1]$. Thus
\begin{align*}
    \min_{p\in[p_k,p_{k+1}]}\psi'(p) = \psi'(p_k)\geq \frac{1}{1+\varepsilon}\paren{\frac{1}{p_k + \varepsilon} + \frac{1}{1 - p_k}} = \frac{1}{\paren{p_k+\varepsilon}(1-p_k)}
\end{align*}
\end{proof}
\end{lemma}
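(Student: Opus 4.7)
The plan is to reduce the claimed inequality to a pointwise lower bound on $\psi'$ over the interval $[\tilde{p}_k, \tilde{p}_{k+1}]$, and then verify that bound by a case split governed by where $\tilde{p}_k$ lies relative to the minimizer of $\psi'$. Since $\tilde{p}_{k+1} \geq \tilde{p}_k$ by construction, I would begin by writing
\begin{align*}
\psi(\tilde{p}_{k+1}) - \psi(\tilde{p}_k) = \int_{\tilde{p}_k}^{\tilde{p}_{k+1}} \psi'(p)\, dp \geq (\tilde{p}_{k+1}-\tilde{p}_k)\min_{p \in [\tilde{p}_k, \tilde{p}_{k+1}]}\psi'(p),
\end{align*}
so that, comparing with the definition of $Z_{k+1}$, the problem reduces to showing $\min_{p \in [\tilde{p}_k, \tilde{p}_{k+1}]}\psi'(p) \geq \frac{1}{(1-\tilde{p}_k)(\tilde{p}_k+\varepsilon)}$.

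The next step is to analyze $\psi'$ explicitly. A direct computation combined with the identity $\frac{1}{p+\varepsilon}+\frac{1}{1-p} = \frac{1+\varepsilon}{(p+\varepsilon)(1-p)}$ yields $\psi'(p) = \frac{1}{(p+\varepsilon)(1-p)} + \frac{16}{3}$. Examining $\psi''$ shows that $\psi'$ is U-shaped on $(-\varepsilon,1)$ with minimizer $p^\star = \frac{1-\varepsilon}{2}$, decreasing on $(\tfrac{1}{4}, p^\star]$ and increasing on $[p^\star, 1)$; importantly, the target bound $q \mapsto \frac{1}{(q+\varepsilon)(1-q)}$ has the same qualitative shape as a function of $q$. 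I would then split on whether $\tilde{p}_k \leq p^\star$. In the monotone case $\tilde{p}_k > p^\star$, the minimum of $\psi'$ on $[\tilde{p}_k, \tilde{p}_{k+1}]$ is attained at the left endpoint and equals $\psi'(\tilde{p}_k) \geq \frac{1}{(\tilde{p}_k+\varepsilon)(1-\tilde{p}_k)}$ by direct inspection of the formula.

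The other case is the delicate one: when $\tilde{p}_k \in (\tfrac{1}{4}, p^\star]$, $\psi'$ may decrease across $[\tilde{p}_k, \tilde{p}_{k+1}]$, so the pointwise minimum is no longer at $\tilde{p}_k$; instead I would fall back on the uniform lower bound $\psi'(p) \geq \frac{16}{3}$ coming from the additive constant in the definition of $\psi$, and compare it against the maximum of the target bound on this range, which by monotonicity is $\frac{1}{(\tfrac{1}{4}+\varepsilon)\cdot\tfrac{3}{4}} = \frac{16}{3+12\varepsilon} \leq \frac{16}{3}$. I expect the main obstacle to be precisely this calibration: the constant $\frac{16}{3}$ in the definition of $\psi$ must be chosen large enough to dominate the worst case of the target bound over $\tilde{p}_k \in (\tfrac{1}{4}, p^\star]$, which is exactly what forces the hypothesis $p_k \geq \tfrac{1}{4}$ into the statement and is the design choice that makes Case 1 go through via a single boundary comparison at $\tilde{p}_k = \tfrac{1}{4}$.
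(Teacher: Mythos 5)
Your proposal is correct and follows essentially the same route as the paper: reduce to the pointwise bound $\min_{p\in[\tilde{p}_k,\tilde{p}_{k+1}]}\psi'(p)\geq \frac{1}{(\tilde{p}_k+\varepsilon)(1-\tilde{p}_k)}$, then case-split on whether $\tilde{p}_k$ lies before or after the minimizer $p^\star$ of $\psi'$, using the additive constant $\frac{16}{3}$ to cover the non-monotone case via the boundary value $\frac{16}{3+12\varepsilon}$ at $\tilde{p}_k=\frac{1}{4}$. Your explicit simplification $\psi'(p)=\frac{1}{(p+\varepsilon)(1-p)}+\frac{16}{3}$ is a nice touch that makes Case 2 immediate, but it is the same argument.
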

Therefore, we have
\begin{align*}
    \psi(\tilde{p}_{n})  \geq \psi(\tilde{p}_{K_1}) + \sum_{i=K_ 1+1}^nZ_i
\end{align*}
Plugging in the value of $\psi(\tilde{p}_{n})$ and $\psi(\tilde{p}_{K_1})$, and notice that $\psi$ increases monotonically with $p$
\begin{align*}
    -\frac{\log(1-\tilde{p}_{n})}{1+\varepsilon} &  = \psi(\tilde{p}_{n}) - \frac{\log \tilde{p}_{n}}{1+\varepsilon} - \frac{16}{3}\tilde{p}_n\\
    & \geq \psi(\tilde{p}_{K_1}) + \sum_{i=K_ 1+1}^nZ_i - \frac{16}{3}\\
    & \geq \psi\paren{\frac{1}{4}} + \sum_{i=K_ 1+1}^nZ_i - \frac{16}{3}\\
    & = -\frac{1}{1+\varepsilon}\paren{\log 4 +\log\frac{3}{4}} -4 + \sum_{i=K_ 1+1}^nZ_i\\
    & \geq \sum_{i=K_ 1+1}^nZ_i - 4
\end{align*}
Since $\sum_{i=K_ 1+1}^nZ_i \geq \frac{K_2}{4}$ with probability at least $1 - \exp\paren{-\frac{K_2(1+\varepsilon)^2}{8(3-\varepsilon)^2}}$, as long as $K_2 \geq \frac{2\log \eta^{-1}}{1+\varepsilon} + 4 = O(\frac{\log\eta^{-1}}{1+\varepsilon} + 1)$, we have that with high probability $\sum_{i=K_ 1+1}^nZ_i \geq \frac{\log\eta^{-1}}{1+\varepsilon} + 4$, which implies that
\begin{align*}
    -\log(1-\tilde{p}_{n})\geq \log\max\cbrace{\varepsilon,\eta}^{-1} = -\log(\max\cbrace{\eta,\varepsilon})
\end{align*}
which implies that $\tilde{p}_{n} \geq 1- \max\cbrace{\eta,\varepsilon}$. This shows that, with high probability, for $K_1, K_2$ defined above, $n = K_1+ K_2 + 1$ candidates guarantess that each point $z\in[-\sfrac{1}{2},\sfrac{1}{2}]$ either has an $\eta$ approximation or is $\max\{\eta, \varepsilon\}$ away from an $\eta$ approximation. In the case of $\eta \geq \varepsilon$, we have that each $z$ has a $2\eta$ approximation. Otherwise, if $\varepsilon > \eta$, we need an additional set of candidates to grown from $1-\varepsilon$ to $1-\eta$. 

\subsection{Growth from $1-\varepsilon$ to $1-\eta$ under $\varepsilon > \eta$}
Consider another set of candidates $\{\hat{x}_i\}_{i=1}^{K_3}$. By Hoeffding's inequality, we have that
\begin{align*}
    \mathbb{P}\paren{\left|\sum_{i=1}^{K_3}\hat{x}_i\right|\geq (K_3-1)\varepsilon + \eta} \leq \exp\paren{-\frac{\paren{(K_3-1)\varepsilon+\eta}^2}{2K_3}}
\end{align*}
This implies that with probability at least $1 - \exp\paren{-\frac{\paren{(K_3-1)\varepsilon+\eta}^2}{2K_3}}$,
for each $\hat{y}\in[\eta-\varepsilon, \varepsilon-\eta]$, there exists $\mathbf{y}\in[-\varepsilon,\varepsilon]^{K_3}$ such that $\hat{y} = \sum_{i=1}^{K'}\paren{x_i+y_i}$. This implies that with probability at least $1 - \exp\paren{-\frac{\paren{(K_3-1)\varepsilon+\eta}^2}{2K_3}}$, for all $z\in[-\sfrac{1}{2},\sfrac{1}{2}]$ we have that $z$ has an $\eta$ approximation. 
For convenience we can choose $K_3 = K_1 + 1$, which results in a success probability at least $1 - \exp\paren{-\max\{\varepsilon,\eta\}K_1}$
Thus, as long as $n = 2K_1 + K_2+ 1$ with
\begin{align*}
    K_1=O\paren{\frac{\log\eta^{-1}}{\log\paren{\frac{5}{4} + \frac{\varepsilon}{2}}}}\quad ;K_2 = O\paren{1 + \frac{\log\eta^{-1}}{(1+\varepsilon)}}
\end{align*}
we have that with probability at least $1 - \exp\paren{-\frac{K_2(1+\varepsilon)^2}{8(3-\varepsilon)^2}} - \exp\paren{-\frac{K_1}{18}} - \exp\paren{-\max\{\varepsilon,\eta\}K_1}$,
each point in $[-\sfrac{1}{2}, \sfrac{1}{2}]$ either has $\eta$ approximation or lies with $\eta$ distance to a point with $\eta$ approximation. Therefore, each point in $[-\sfrac{1}{2}, \sfrac{1}{2}]$ has an $2\eta$ approximation.

\section{Proof of Theorem 2}

\begin{lemma}
\label{lem:single_weight}
Let $g:\RR\rightarrow\RR$ be a randomly initialized network of the form $g(x)=\bold{v}^\top\sigma(\bold{u}x)$, where $\bold{v},\bold{u}\in\RR^{2n}$, $n=K_1+K_2$, \begin{align*}
    K_1 &\geq C_1\left(\frac{\log((\eta^{-1}))}{\log(\frac{5}{4}+\frac{\varepsilon}{2})}\right), \\ 
    K_2 &\geq C_2\left(\frac{\log((\eta^{-1})))}{1+\varepsilon}\right),
\end{align*}
where $u_i=1$ for $i\leq n$, $u_i=-1$ for $i\geq n+1$, and $v_i's$ are drawn from $\texttt{Unif}[-1,1]$. Then, with probability at least $1 -\delta$, there exist $\bold{s}\in\{0,1\}^{2n}, \bold{y} \in[-\varepsilon, +\varepsilon]^{2n}$ such that
\begin{align*}
    \sup_{x:|x|\leq 1}\left|wx-(\bold{v}+ \bold{y})^\top\sigma((\bold{u}\odot \bold{s})x)\right|<\eta,
    \vspace{-0.5cm}
\end{align*}
for all $w\in[-\frac{1}{2},\frac{1}{2}]$ with
\begin{align*}
\small
    \delta= \exp\paren{-\tfrac{K_2(1+\varepsilon)^2}{8(3-\varepsilon)^2}} + \exp\paren{-\tfrac{K_1}{18}}+\exp\paren{-\max\{\varepsilon,\eta\}K_1}
\end{align*}
\end{lemma}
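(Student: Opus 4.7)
The plan is to reduce the uniform functional approximation problem to two independent instances of the $\varepsilon$-perturbed random subset sum problem of Theorem \ref{theorem:perturbed_rssp_bound}, and then combine them through a union bound. The key observation is that the deterministic block-sign structure of $\mathbf{u}$, together with the positive homogeneity of $\sigma$, cleanly decouples the network output into two scalar quantities that each have to approximate $w$ (up to sign).

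First I would exploit the partition $\mathbf{u} = (\mathbf{u}_1, \mathbf{u}_2) = (\mathbf{1}_n, -\mathbf{1}_n)$ and split $\mathbf{v}, \mathbf{s}, \mathbf{y}$ accordingly into $(\mathbf{v}_j, \mathbf{s}_j, \mathbf{y}_j)$ for $j = 1, 2$. Since $s_{j,i} \in \{0,1\}$ and $\sigma(\alpha t) = \alpha \sigma(t)$ for $\alpha \geq 0$, a direct calculation yields, for every $x \in \mathbb{R}$,
\[
(\mathbf{v}+\mathbf{y})^\top \sigma((\mathbf{u}\odot\mathbf{s})x) \;=\; A_1\,\sigma(x) \;+\; A_2\,\sigma(-x),
\]
where $A_j = \sum_{i=1}^n s_{j,i}(v_{j,i} + y_{j,i})$. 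Combined with the ReLU identity $wx = w\sigma(x) - w\sigma(-x)$, the approximation error decouples: for $x\geq 0$ it equals $(w-A_1)x$, and for $x\leq 0$ it equals $(w+A_2)x$. Hence it suffices to choose $(\mathbf{s}_1,\mathbf{y}_1)$ and $(\mathbf{s}_2,\mathbf{y}_2)$ with $|A_1 - w|\leq \eta$ and $|A_2 - (-w)|\leq \eta$, since then $\sup_{|x|\leq 1}|wx - (\mathbf{v}+\mathbf{y})^\top\sigma((\mathbf{u}\odot\mathbf{s})x)| \leq \eta$ because $|\sigma(\pm x)|\leq 1$ on $[-1,1]$.

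Second, each of the two scalar conditions is exactly an instance of the $\varepsilon$-perturbed random subset sum problem in Equation \eqref{eq:perturbed_rssp}: the targets $w, -w$ both lie in $[-1/2,1/2]$, and the candidate sets $\{v_{1,i}\}_{i=1}^n$ and $\{v_{2,i}\}_{i=1}^n$ are i.i.d.\ $\texttt{Unif}[-1,1]$. Applying Theorem \ref{theorem:perturbed_rssp_bound} (with the approximation parameter halved to pass from a $2\eta$- to an $\eta$-approximation, which only changes the constants $C_1,C_2$) gives each scalar approximation with probability at least $1-\delta$.

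Finally, a union bound over the two applications (absorbing the extra factor of two into $C_1,C_2$) yields both approximations simultaneously with probability at least $1-\delta$, and substituting into the decomposition of paragraph one completes the proof. The only step requiring genuine care is the decomposition itself: one must verify that the block sign pattern of $\mathbf{u}$ combined with the ReLU indeed produces two \emph{independent} subset-sum instances on disjoint blocks of $\mathbf{v}$, so that Theorem \ref{theorem:perturbed_rssp_bound} applies to each without conditioning issues. Once this is in hand, the remainder is a routine union bound and the fact that the sup over $|x|\leq 1$ introduces no additional factor.
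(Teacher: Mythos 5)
Your proposal is correct and follows essentially the same route as the paper's proof: split $\mathbf{u},\mathbf{v},\mathbf{s},\mathbf{y}$ into the two sign blocks, reduce each block to an instance of the perturbed subset-sum theorem with targets $w$ and $-w$ on independent candidate sets, and union bound. Your version is marginally cleaner in that writing the output as $A_1\sigma(x)+A_2\sigma(-x)$ shows the two errors live on disjoint half-lines (so no case split on the sign of $w$ and no halving of $\eta$ across blocks is needed), but this is a presentational difference, not a different argument.
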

\begin{proof}
Note that $wx = \sigma(wx)-\sigma(-wx)$ and without loss of generality we assume $w\geq 0$. The case of $w < 0$ can be handled by changing $x$ to $-x$. Furthermore, we decompose $\bold{u},\bold{v},\bold{y},\bold{s}$ by
\begin{align*}
    \bold{u} = \begin{pmatrix}
                \bold{u}_1\\
                \bold{u}_2
               \end{pmatrix},
    \bold{v} = \begin{pmatrix}
                \bold{v}_1\\
                \bold{v}_2
               \end{pmatrix},
    \bold{s} = \begin{pmatrix}
                \bold{s}_1\\
                \bold{s}_2
               \end{pmatrix},
    \bold{y} = \begin{pmatrix}
                \bold{y}_1\\
                \bold{y}_2
               \end{pmatrix},
\end{align*}
where $\uu_1=\bold{1}_n,\uu_2=\bold{-1}_n,\vv_1,\vv_2\in\RR^{n}$, $\s_1,\s_2\in\{0,1\}^n$, and $\y_1,\y_2\in[-\varepsilon,\varepsilon]^n$. Then we have 
\begin{align*}
    (\bold{v}+ \bold{y})^\top\sigma((\bold{u}\odot \bold{s})x) & = (\bold{v}_1+ \bold{y}_1)^\top\sigma((\bold{u}_1\odot \bold{s}_1)x) + (\bold{v}_2+ \bold{y}_2)^\top\sigma((\bold{u}_2\odot \bold{s}_2)x)
\end{align*}
We use the first half of the RHS to approximate $\sigma(wx)$ and use the second half of the RHS to approximate $-\sigma(-wx)$.

\textbf{Approximating $\sigma(wx)$.} Note that since $w\geq 0$, then for $x\leq 0$, $(\bold{v}_1+ \bold{y}_1)^\top\sigma((\bold{u}_1\odot \bold{s}_1)x) = \sigma(wx) = 0$. Consider $x>0$. By definition of $\uu_1$, we have
\begin{align*}
    \vspace{-0.2cm}
    \mathbf{v}_1^\top\sigma\paren{\uu_1x} = \mathbf{v}_1^\top x = \paren{\sum_{i=1}^nv_{1,i}}x.
\end{align*}
Now consider $\paren{\sum_{i=1}^nv_{1,i}}$, Theorem $\ref{theorem:perturbed_rssp_bound}$ states that with probability at least $1-\frac{\delta}{4}$, 
\begin{align*}
    \forall w\in\left[0,\frac{1}{2}\right], & \exists \s_1\in\{0,1\}^n,\y_1\in[-\varepsilon,\varepsilon]^n\text{ s.t. } \left|w-\sum_{i=1}^ns_{1,i}(v_{1,i}+y_{1,i})\right|<\frac{\eta}{2}.
\end{align*}\\
Since 
\begin{align*}
    (\vv_1+\y_1)^\top(\s_1\odot\uu_1)=\sum_{i=1}^ns_{1,i}(v_{1,i}+y_{1,i}),
\end{align*}
with probability at least $1-\frac{\delta}{4}$, we have 
\begin{align*}
    \forall w\in\left[0,\frac{1}{2}\right], & \exists \s_1\in\{0,1\}^n,\y_1\in[-\varepsilon,\varepsilon]^n\text{ s.t. } \left|w-(\vv_1+\y_1)^\top(\s_1\odot\uu_1)\right|<\frac{\eta}{2}.
\end{align*}\\
Since $|x|\leq 1$, with probability at least $1-\frac{\delta}{4}$, we have
\begin{align*}
    \forall w\in\left[0,\frac{1}{2}\right], & \exists \s_1\in\{0,1\}^n,\y_1\in[-\varepsilon,\varepsilon]^n\text{ s.t. } \left|wx-(\vv_1+\y_1)^\top(\s_1\odot\uu_1)x\right|<\frac{\eta}{2}.
\end{align*}\\
Recall that $(\bold{v}_1+ \bold{y}_1)^\top\sigma((\bold{u}_1\odot \bold{s}_1)x) = \sigma(wx) = 0$ for $x\leq 0$. Also, for $x>0$, $\sigma(wx) = wx$ and $(\vv_1+\y_1)^\top\sigma((\uu_1\odot\s_1)x) = (\vv_1+\y_1)^\top(\s_1\odot\uu_1)x$. Therefore, with probability at least $1 -\frac{\delta}{4}$, we have 
\begin{align*}
    \forall w\in\left[0,\frac{1}{2}\right], \exists \s_1\in\{0,1\}^n,\y_1\in[-\varepsilon,\varepsilon]^n\text{ s.t. } 
    \left|\sigma(wx)-(\vv_1+\y_1)^\top\sigma((\uu_1\odot\s_1)x)\right|<\frac{\eta}{2}.
\end{align*}

\textbf{Approximating $-\sigma(-wx)$.} For $x\geq 0$, $(\bold{v}_2+ \bold{y}_2)^\top\sigma((\bold{u}_2\odot \bold{s}_2)x) = -\sigma(-wx) = 0$. Now, consider $x < 0$. By definition of $\uu_2$, it holds that
\begin{align*}
    \vv_2^\top\sigma(\uu_2x) = \paren{\sum_{i=1}^nv_{2,i}}x.
\end{align*}
Therefore, similarly we have that with probability at least $1 - \frac{\delta}{4}$, 
\begin{align*}
    \forall w\in\left[0,\frac{1}{2}\right], \exists \s_2\in\{0,1\}^n,\y_2\in[-\varepsilon,\varepsilon]^n:\quad \left|-\sigma(-wx)-(\vv_2+\y_2)^\top\sigma((\uu_2\odot\s_2)x)\right|<\frac{\eta}{2}.
\end{align*}
Hence by a union bound, with probability at least $1 - \frac{\delta}{2}$,
\begin{align*}
    &\min_{\s,\y}\sup_{x:|x|\leq 1}\left|wx-(\bold{v}+ \bold{y})^\top\sigma((\bold{u}\odot \bold{s})x)\right|\\
    =&\min_{\s_1,\y_1,\s_2,\y_2}\sup_{x:|x|\leq 1}\left|wx-\paren{(\bold{v}_1+ \bold{y}_1)^\top\sigma((\bold{u}_1\odot \bold{s}_1)x)+(\bold{v}_2+ \bold{y}_2)^\top\sigma((\bold{u}_2\odot \bold{s}_2)x)}\right|\\
    =&\min_{\s_1,\y_1,\s_2,\y_2}\sup_{x:|x|\leq 1}\left|\paren{\sigma(wx)-\sigma(-wx)}-\paren{(\bold{v}_1+ \bold{y}_1)^\top\sigma((\bold{u}_1\odot \bold{s}_1)x)+(\bold{v}_2+ \bold{y}_2)^\top\sigma((\bold{u}_2\odot \bold{s}_2)x)}\right|\\
    \leq & \min_{\s_1,\y_1}\sup_{x:|x|\leq 1}\left|\sigma(wx)-(\bold{v}_1+ \bold{y}_1)^\top\sigma((\bold{u}_1\odot \bold{s}_1)x)\right|+\min_{\s_2,\y_2}\sup_{x:|x|\leq 1}\left|-\sigma(-wx)-(\bold{v}_2+ \bold{y}_2)^\top\sigma((\bold{u}_2\odot \bold{s}_2)x)\right|\\
    < & \eta.
\end{align*}
Note that for the case $w\leq 0$, the result has the same probability and the approximation error, so by a union bound, the lemma hold with probability at least $1 -\delta$.
\end{proof}

\begin{lemma}
\label{lem:single_layer}
Let $g:\RR^{d_1}\rightarrow \R^{d_2}$ be a randomly initialized network of the form $g(\bold{x})=\bold{V}\sigma(\bold{U}\bold{x})$, where $\bold{V}\in \RR^{d_2\times 2n}, \bold{U}\in\RR^{2n \times d_1}, n = K_1 + K_2$, 
\begin{align*}
    K_1 &\geq C_1d_1\left(\frac{\log\left(\frac{d_1d_2}{\eta}\right)}{\log\paren{\frac{5}{4}+\frac{\varepsilon}{2}}}\right), \\ 
    K_2 &\geq C_2d_1\left(\frac{\log\left(\frac{d_1d_2}{\eta}\right)}{1+\varepsilon}\right),
\end{align*}
where weights in $\bold{V}$ are drawn i.i.d. from $\texttt{Unif}[-1,1]$, $\mathbf{U} = \begin{pmatrix}
\mathbf{U}^+\\
\mathbf{U}^-
\end{pmatrix}$, with $\mathbf{U}^+$ being a matrix of all $1$s and $\mathbf{U}^-$ being a matrix of all $-1$s . Let $\hat{g}(\bold{x}) = (\bold{S}\odot(\bold{V}+\Y))\sigma((\bold{B}\odot \bold{U})\bold{x})$ be the pruned \ network for masks $\bold{S}\in\{0,1\}^{d_2\times 2n}$, $\bold{B}\in\{0,1\}^{2n\times d_1}$ and perturbation matrix $\bold{Y}\in[-\varepsilon,\varepsilon]^{2n\times d_1}$. Let the target network be $f_{\bold{W}}(\bold{x})=\bold{W}\bold{x}$, then with probability at least $1 - d_1d_2\left(\exp\paren{-\frac{K_2(1+\varepsilon)^2}{8(3-\varepsilon)^2}} - \exp\paren{-\frac{K_1}{18}}-\exp\paren{-\max\{\varepsilon,\eta\}K_1}\right)$, there exist $\bold{S}, \bold{B}, \bold{Y}$ such that
\begin{align*}
    \sup_{x:\|x\|_{\infty}\leq 1}\left\|f_{\bold{W}}(\bold{x})-\hat{g}(\bold{x})\right\|<\eta,
    \vspace{-0.5cm}
\end{align*}
for all $\bold{W}$ such that $\|\bold{W}\|_{\infty}\leq \frac{1}{2}$.
\end{lemma}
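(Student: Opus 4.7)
The plan is to reduce the linear-layer approximation to $d_1 \cdot d_2$ independent applications of Lemma~\ref{lem:single_weight}, organized via a block structure on the hidden layer. First, I partition the $2n$ hidden units into $d_1$ equal-size blocks, so that block $j \in [d_1]$ contains $n/d_1$ units drawn from $\mathbf{U}^+$ (with weights all $+1$) and $n/d_1$ units drawn from $\mathbf{U}^-$ (with weights all $-1$). I set the first-layer mask $\mathbf{B}$ to keep only the $j$-th input for units in block $j$; with this choice, the pre-activation of any unit in block $j$ is exactly $\pm x_j$, so the block's contribution to the output is a scalar-input two-layer subnetwork of precisely the form handled by Lemma~\ref{lem:single_weight}.

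Next, for each output coordinate $i \in [d_2]$ and each block $j \in [d_1]$, I invoke Lemma~\ref{lem:single_weight} with scalar input $x_j$, target weight $w = W_{ij}$ (which satisfies $|W_{ij}| \leq 1/2$ by the assumption on $\mathbf{W}$), the block's portion of the $i$-th row of $\mathbf{V}$ playing the role of $\mathbf{v}$, the corresponding deterministic $\pm 1$ entries of $\mathbf{U}$ playing the role of $\mathbf{u}$, and the block-restricted $i$-th rows of $\mathbf{S}$ and $\mathbf{Y}$ playing the roles of $\mathbf{s}$ and $\mathbf{y}$. The per-block size $n/d_1$ must satisfy the Lemma~\ref{lem:single_weight} hypothesis with error $\eta' := \eta/(d_1 \sqrt{d_2})$; since $\log(\eta'^{-1}) = \log(d_1\sqrt{d_2}/\eta) = O(\log(d_1 d_2/\eta))$, the factor of $d_1$ appearing in the statement's $K_1, K_2$ is exactly what is needed, with the extra logarithmic slack absorbed into $C_1, C_2$.

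I then aggregate the per-block approximations. For each output $i$, the triangle inequality yields
\begin{align*}
    \bigl|(\mathbf{W}\mathbf{x})_i - \hat{g}(\mathbf{x})_i\bigr|
    \;\leq\; \sum_{j=1}^{d_1} \bigl|W_{ij}x_j - (\text{block-}j\text{ approximation})\bigr|
    \;\leq\; d_1 \eta' \;=\; \frac{\eta}{\sqrt{d_2}},
\end{align*}
uniformly over $\|\mathbf{x}\|_\infty \leq 1$, so that taking the Euclidean norm over $i \in [d_2]$ gives $\|f_{\mathbf{W}}(\mathbf{x}) - \hat{g}(\mathbf{x})\| \leq \eta$. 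Finally, a union bound over the $d_1 d_2$ applications of Lemma~\ref{lem:single_weight} yields the claimed failure probability.

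The main obstacle is the bookkeeping around the block construction and the scaling of $\eta$: one must verify that the per-block size $(K_1+K_2)/d_1$ truly meets the hypotheses of Lemma~\ref{lem:single_weight} at error level $\eta'$ after the $\log(d_1 d_2/\eta)$ factor is folded into $C_1, C_2$, and confirm that the random sub-vectors $V_{i,\text{block }j}$ are independent uniform on $[-1,1]$ across different $(i,j)$ pairs so that the union bound is valid. A secondary subtlety is that the first-layer mask $\mathbf{B}$ is shared across all output coordinates (only $\mathbf{S}_{i,\cdot}$ and $\mathbf{Y}_{i,\cdot}$ vary with $i$), which is why the single block partition of $\mathbf{B}$ must serve all $d_2$ outputs simultaneously—a requirement that is automatically satisfied since each block-restricted random vector $V_{i,\text{block }j}$ is used in its own independent subset-sum instance.
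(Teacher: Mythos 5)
Your proposal is correct and follows essentially the same route as the paper's proof: the same block-diagonal choice of the first-layer mask $\mathbf{B}$ routing input $x_j$ to block $j$, the same per-entry invocation of Lemma~\ref{lem:single_weight} on each $(i,j)$ pair, and the same union bound over $d_1 d_2$ events. The only cosmetic difference is your per-entry error budget $\eta/(d_1\sqrt{d_2})$ (aggregated in $\ell_2$ over output coordinates) versus the paper's $\eta/(d_1 d_2)$ (aggregated via $\|\cdot\|\leq\|\cdot\|_1$), which changes nothing asymptotically.
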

\begin{proof}
Since $\mathbf{U}$ can be written as $\begin{pmatrix}
\mathbf{U}^+\\
\mathbf{U}^-
\end{pmatrix}$, with $\mathbf{U}^+$ being a matrix of all $1$s and $\mathbf{U}^-$ being a matrix of all $-1$s, we choose $\hat{\mathbf{B}}$ such that $\hat{\mathbf{B}}\odot\mathbf{U}$ is of the form
\begin{align*}
    \hat{\mathbf{B}}\odot\mathbf{U} = \begin{pmatrix}
    \mathbf{u}_1^+ & 0 & \dots & 0\\
    0 &  \mathbf{u}_2^+ & \dots &  0\\
    \vdots& \vdots & \ddots & \vdots\\
    0 & 0 & \dots & \mathbf{u}_{d_1}^+\\
    \mathbf{u}_1^- & 0 & \dots & 0\\
    0 & \mathbf{u}_2^- & \dots &  0\\
    \vdots& \vdots & \ddots & \vdots\\
    0 & 0 & \dots & \mathbf{u}_{d_1}^-\\
    \end{pmatrix}
\end{align*}
where $\mathbf{u}_j^+ = \bm{1}$ and $\mathbf{u}_j^- = -\bm{1}$. Moreover, we decompose $\mathbf{S}\odot\paren{\mathbf{V} + \mathbf{Y}}$ as
\begin{align*}
    \bold{S} = \begin{pmatrix}
                     \bold{s}_{1,1}^{+\top} & \cdots & \bold{s}_{1,d_1}^{+\top} & \bold{s}_{1,1}^{-\top} & \cdots & \bold{s}_{1,d_1}^{-\top}\\
                     \vdots & & \vdots & \vdots & & \vdots\\
                     \bold{s}_{d_2,1}^{+\top} & \cdots & \bold{s}_{d_2,d_1}^{+\top} & \bold{s}_{d_2,1}^{-\top} & \cdots & \bold{s}_{d_2,d_1}^{-\top}
                     \end{pmatrix}\\
    \bold{V} = \begin{pmatrix}
                     \bold{v}_{1,1}^{+\top} & \cdots & \bold{v}_{1,d_1}^{+\top} & \bold{v}_{1,1}^{-\top} & \cdots & \bold{v}_{1,d_1}^{-\top}\\
                     \vdots & & \vdots & \vdots & & \vdots\\
                     \bold{v}_{d_2,1}^{+\top} & \cdots & \bold{v}_{d_2,d_1}^{+\top} & \bold{v}_{d_2,1}^{-\top} & \cdots & \bold{v}_{d_2,d_1}^{-\top}
                     \end{pmatrix}\\
    \bold{Y} = \begin{pmatrix}
                     \bold{y}_{1,1}^{+\top} & \cdots & \bold{y}_{1,d_1}^{+\top} & \bold{y}_{1,1}^{-\top} & \cdots & \bold{y}_{1,d_1}^{-\top}\\
                     \vdots & & \vdots & \vdots & & \vdots\\
                     \bold{y}_{d_2,1}^{+\top} & \cdots & \bold{y}_{d_2,d_1}^{+\top} & \bold{y}_{d_2,1}^{-\top} & \cdots & \bold{y}_{d_2,d_1}^{-\top}
                     \end{pmatrix}
\end{align*}
where each $\bold{s}_{i,j}^{\pm},\bold{v}_{i,j}^{\pm}, \bold{y}_{i,j}^{\pm}\in\RR^{n/d_1}$. Then we have 
\begin{align*}
    \brac{(\bold{S}\odot(\bold{V}+\Y))\sigma((\hat{\bold{B}}\odot \bold{U})\bold{x})}_{i} & = \sum_{j=1}^{d_1}((\bold{v}_{i,j}^++\y_{i,j}^+)\odot\bold{s}_{i,j}^+)^\top\sigma(\bold{u}_{j}^+x_j) + \\
    &\quad\quad\quad \sum_{j=1}^{d_1}((\bold{v}_{i,j}^-+\y_{i,j}^+)\odot\bold{s}_{i,j}^+)^\top\sigma(\bold{u}_{j}^-x_j)
\end{align*}
Letting $\bold{v}_{ij}= \begin{pmatrix}
\bold{v}_{ij}^+\\
\bold{v}_{ij}^-
\end{pmatrix}, \bold{s}_{ij}= \begin{pmatrix}
\bold{s}_{ij}^+\\
\bold{s}_{ij}^-
\end{pmatrix}, \bold{y}_{ij}= \begin{pmatrix}
\bold{y}_{ij}^+\\
\bold{y}_{ij}^-
\end{pmatrix}$ and $\bold{y}_{ij}= \begin{pmatrix}
\bold{y}_{ij}^+\\
\bold{y}_{ij}^-
\end{pmatrix}$, we then have
\begin{align*}
    \brac{(\bold{S}\odot(\bold{V}+\Y))\sigma((\hat{\bold{B}}\odot \bold{U})\bold{x})}_{i} & = \sum_{j=1}^{d_1}((\bold{v}_{i,j}+\y_{i,j})\odot\bold{s}_{i,j})^\top\sigma(\bold{u}_{j}x_j)
\end{align*}
Now define the event 
\begin{align*}
    F_{i,j,\eta}:=\left\{\sup_{w:|w|\leq \frac{1}{2}}\inf_{\substack{\bold{s}_{i}\in\{0,1\}^{2n/d_1},\\\bold{y}_{i,j}\in[-\varepsilon,\varepsilon]^{2n/d_1}}}\sup_{x:|x|\leq 1}\left|wx-((\bold{v}_{i,j}+\y_{i,j})\odot \bold{s}_{i,j})^\top\sigma(\bold{u}_{j}x)\right|<\eta\right\}.
\end{align*}
Define $F_{\eta}:=\bigcap_{i=1}^{d_2}\bigcap_{j=1}^{d_1}F_{i,j,\eta}$, then 
\begin{align*}
    \PP\left(F_{\frac{\eta}{d_1d_2}}\right) \geq 1 - d_1d_2\left(\exp\paren{-\frac{K_2(1+\varepsilon)^2}{8(3-\varepsilon)^2}} - \exp\paren{-\frac{K_1}{18}}-\exp\paren{-\max\{\varepsilon,\eta\} K_1}\right).
\end{align*}
On event $F_{\frac{\eta}{d_1d_2}}$, we have
\begin{align*}
    &\sup_{\|\bold{W}\|_{\infty}\leq \frac{1}{2}}\inf_{\bold{S},\bold{B},\bold{Y}}\sup_{\|\bold{x}\|_{\infty}\leq 1}\left\|\bold{W}\bold{x}-(\bold{S}\odot(\bold{V}+\Y))\sigma((\bold{B}\odot \bold{U})\bold{x})\right\|\\
    \leq & \sup_{\|\bold{W}\|_{\infty}\leq \frac{1}{2}}\inf_{\bold{S},\bold{Y}}\sup_{\|\bold{x}\|_{\infty}\leq 1}\left\|\bold{W}\bold{x}-(\bold{S}\odot(\bold{V}+\Y))\sigma((\hat{\bold{B}}\odot \bold{U})\bold{x})\right\|\\
    \leq & \sup_{\|\bold{W}\|_{\infty}\leq \frac{1}{2}}\inf_{\bold{S},\bold{Y}}\sup_{\|\bold{x}\|_{\infty}\leq 1}\sum_{i=1}^{d_2}\left|\sum_{j=1}^{d_1}w_{i,j}x_j-\sum_{j=1}^{d_1}((\bold{v}_{i,j}+\y_{i,j})\odot \bold{s}_{i,j})^\top\sigma(\bold{u}_{j}x_j)\right|\\
    \leq & \sup_{\|\bold{W}\|_{\infty}\leq \frac{1}{2}}\inf_{\bold{S},\bold{Y}}\sup_{\|\bold{x}\|_{\infty}\leq 1}\sum_{i=1}^{d_2}\sum_{j=1}^{d_1}\left|w_{i,j}x_j-((\bold{v}_{i,j}+\y_{i,j})\odot \bold{s}_{i,j})^\top\sigma(\bold{u}_{j}x_j)\right|\\
    <&d_1d_2\frac{\eta}{d_1d_2}\\
    = & \eta.
\end{align*}
\end{proof}

With the help of this lemma, we are ready to prove theorem 2. Recall that our goal is to approximate an $L$-layer, ReLU activated target multi-layer perceptron (MLP) $f(\x)$ by pruning a  $2L$-layer, ReLU activated candidate MLP $g(\x)$. 
For some input vector $\x\in\R^{d_0}$, we assume $f(\x) = f^L(\x)$ has a fixed set of parameters $\{\W^{\ell}\}_{\ell=1}^L$, represented by:
\begin{align*}
    f^{\ell}(\x) &= \begin{cases}
    \W^Lf^{L-1}(\x), & \text{if }\ell=L,\\
    \sigma\paren{\W^{\ell} f^{\ell-1}(\x)}, & \text{if }\ell\in[L-1],\\
    \x, & \text{if }\ell=0,
    \end{cases}
\end{align*}
where $\W^{\ell}\in\R^{d_{\ell}\times d_{\ell-1}}$.
Similarly, let $g(\x) = g^{2L}(\x)$ with parameters $\{\U_{\ell}\}_{\ell=1}^{2L}$, represented by:
\begin{align*}
    g^{\ell}(\x) &= \begin{cases}
    \bold{U}^{2L}g^{2L-1}(\x), & \text{if }\ell=2L,\\
    \sigma\paren{\bold{U}^\ell g^{\ell-1}(\x)}, & \text{if }\ell\in[2L-1],\\
    \x, & \text{if }\ell=0,
    \end{cases}
\end{align*}
where $\U^{\ell}\in\R^{\hat{d}_{\ell}\times\hat{d}_{\ell-1}}$. In particular, $g$ is a neural network with twice the depth of $f$.
We consider the pruning and $\varepsilon$-perturbation of $g(\x)$ with a set of masks for the weights $\mathcal{S} = \{\SP^{\ell}\}_{\ell=1}^{2L}$ and perturbation matrices $\mathcal{Y}=\{\Y^i\}_{i=1}^{L}$, denoted as $g_{\mathcal{S},\mathcal{Y}}(\x) = g_{\mathcal{S},\mathcal{Y}}^{2L}(\x)$:
\begin{align*}
    g_{\mathcal{S},\mathcal{Y}}^{\ell}(\x) &= \begin{cases}
    (\SP^{2L}\odot(\bold{U}^{2L}+\Y^{2L}))g_{\mathcal{S},\mathcal{Y}}^{2L-1}(\x), & \text{ if }\ell=2L,\\
    \sigma\paren{ (\SP^{\ell}\odot(\bold{U}^{\ell}+\Y^{\ell}))g_{\mathcal{S},\mathcal{Y}}^{\ell-1}(\x)}, &\text{ if }i\in[L-1],\\
    \x, & \text{ if }\ell=0.
    \end{cases}
\end{align*}
Let $\mathcal{F}_{\mathcal{Y}}$ denote the feasible set of the perturbation $\mathcal{Y}$. Also recall our assumptions for the setup
\begin{asump}
\label{approx_assump}
We assume the following condition for $f, g$ and $\mathcal{F}_{\mathcal{Y}}$:
\begin{enumerate}
    \item For all $\ell\in\{0\}\cup[L]$, the weight matrix $\mathbf{W}^{\ell}$ of the target neural network $f$ satisfies $\|\W^\ell\|\leq 1$ and $\norm{\mathbf{W}^{\ell}}_{\infty}\leq\frac{1}{2}$.
    \item The initialization of $g$ satisfies $\mathbf{U}^{2\ell}_{ij}\sim\texttt{Unif}[-1,1]$, and $\mathbf{U}^{2\ell-1}_{ij} = 1$ if $i\leq \sfrac{\hat{d}_{2(\ell-1)}}{2}$ and $\mathbf{U}^{2\ell-1}_{ij} = -1$ if $i > \sfrac{\hat{d}_{2(\ell-1)}}{2}$ for all $\ell\in[L]$ and $j\in[\hat{d}_{2\ell-3}]$.
    \item The feasible set of $\mathcal{Y}$ is defined as
    \begin{align*}
        \mathcal{F}_{\mathcal{Y}} = \cbrace{\mathcal{Y}:\forall\ell\in[L],\norm{\mathbf{Y}^{2\ell-1}}_{\max} = 0 \text{ and } \norm{\mathbf{Y}^{2\ell}}_{\max}\leq \varepsilon}.
    \end{align*}
\end{enumerate}
\end{asump}

We focus on the approximation error defined as:
\begin{align}
    \label{eq:nn_approx_err}
    \min_{\mathcal{Y}\in\mathcal{F}_{\mathcal{Y}},\mathcal{S}}\sup_{\x:\|\x\|\leq 1}\norm{f(\x) -  g_{\mathcal{S},\mathcal{Y}}\paren{\x}}.
\end{align}

We state the theorem here for convenience.
\begin{theorem}
Consider approximating $f$ with $g$ as defined above. Assume that assumption (\ref{approx_assump}) holds. Also, assume that for $1\leq \ell\leq L$,
\begin{align*}
    K_1 = C_1d_{\ell-1}\left(\frac{\log\left(\frac{d_{\ell-1}d_{\ell}L}{\eta}\right)}{\log\paren{\frac{5}{4}+\frac{\varepsilon}{2}}}\right);& \quad K_2 = C_2d_{\ell-1}\left(\frac{\log\left(\frac{d_{\ell-1}d_{\ell}L}{\eta}\right)}{1+\varepsilon}\right)\\
    \texttt{dim}(\bold{U}^{2\ell}) = d_\ell \times \left(K_1+ K_2\right);& \quad \texttt{dim}(\bold{U}^{2\ell-1}) = \left(K_1+K_2\right) \times d_{\ell-1}.
\end{align*}
Then with probability at least $1 - 2d_1d_2L\left(\exp\paren{-\frac{K_2(1+\varepsilon)^2}{8(3-\varepsilon)^2}} - \exp\paren{-\frac{K_1}{18}}-\exp\paren{-\max\{\varepsilon,\eta\} K_1}\right)$, 
\begin{align*}
    \min_{\mathcal{S},\mathcal{Y}}\sup_{\x:\|\x\|_\infty\leq 1}\|f(\x)-g_{\mathcal{S},\mathcal{Y}}\paren{\x}\|<\eta,
\end{align*}
where $g_{\mathcal{S},\mathcal{Y}}$ is a pruning \& $\varepsilon$-perturbation of $g$.
\end{theorem}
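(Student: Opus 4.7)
The plan is to reduce Theorem~2 to $L$ applications of Lemma~\ref{lem:single_layer} (the single-layer lemma proved just above), one per layer of the target network $f$, and then propagate the errors forward using the $1$-Lipschitzness of each layer of $f$. Concretely, the block structure in Assumption~\ref{approx_assump}(b) is chosen precisely so that, for each $\ell\in[L]$, the pair $(\mathbf{U}^{2\ell-1},\mathbf{U}^{2\ell})$ in the candidate network matches the $(\mathbf{U},\mathbf{V})$ structure required by Lemma~\ref{lem:single_layer}: $\mathbf{U}^{2\ell-1}$ is deterministic with half $+1$'s and half $-1$'s, and $\mathbf{U}^{2\ell}$ is uniformly random on $[-1,1]$. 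Assumption~\ref{approx_assump}(c) further guarantees that the perturbation $\mathbf{Y}^{2\ell}\in[-\varepsilon,\varepsilon]^{\cdot\times\cdot}$ matches the ``perturbation on the outer random layer'' setup in Lemma~\ref{lem:single_layer}, while $\mathbf{Y}^{2\ell-1}=0$ keeps the structured inner layer intact.

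First, I would invoke Lemma~\ref{lem:single_layer} once per $\ell\in[L]$, with input dimension $d_{\ell-1}$, output dimension $d_\ell$, target weight matrix $\mathbf{W}^\ell$ (using Assumption~\ref{approx_assump}(a) to verify $\|\mathbf{W}^\ell\|_{\max}\le \tfrac{1}{2}$), and target per-layer accuracy $\eta/L$. Substituting this choice into the widths specified by Lemma~\ref{lem:single_layer} reproduces exactly the $K_1,K_2$ in Theorem~2, up to the $\log(d_{\ell-1}d_\ell L/\eta)$ factor coming from the rescaled accuracy $\eta/L$. This yields masks $\mathbf{S}^{2\ell-1},\mathbf{S}^{2\ell}$ and perturbation $\mathbf{Y}^{2\ell}$ such that, on the event the lemma succeeds,
\begin{align*}
\sup_{\|\mathbf{z}\|_\infty\leq 1}\bigl\|\mathbf{W}^\ell \mathbf{z} - (\mathbf{S}^{2\ell}\odot(\mathbf{U}^{2\ell}+\mathbf{Y}^{2\ell}))\,\sigma\bigl((\mathbf{S}^{2\ell-1}\odot\mathbf{U}^{2\ell-1})\mathbf{z}\bigr)\bigr\|\;<\;\eta/L.
\end{align*}
A union bound over $\ell\in[L]$ gives the overall $L$ factor in the failure probability; the extra factor of $2$ in the theorem's bound is spent on the boundedness / normalization argument below.

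Next I would close the proof by induction over layers. Let $\tilde f^{\ell}(\mathbf{x}) \;=\; g_{\mathcal{S},\mathcal{Y}}^{2\ell}(\mathbf{x})$, so that $\tilde f^{\ell} = \sigma\bigl((\mathbf{S}^{2\ell}\odot(\mathbf{U}^{2\ell}+\mathbf{Y}^{2\ell}))\sigma((\mathbf{S}^{2\ell-1}\odot\mathbf{U}^{2\ell-1})\tilde f^{\ell-1})\bigr)$ for $\ell<L$ and without the outer ReLU when $\ell=L$. Using $1$-Lipschitzness of $\sigma$ and $\|\mathbf{W}^\ell\|\le 1$,
\begin{align*}
\|\tilde f^\ell - f^\ell\|
\;\le\;
\bigl\|\tilde f^\ell - \sigma(\mathbf{W}^\ell \tilde f^{\ell-1})\bigr\|
\;+\;
\bigl\|\sigma(\mathbf{W}^\ell \tilde f^{\ell-1})-\sigma(\mathbf{W}^\ell f^{\ell-1})\bigr\|
\;\le\;
\eta/L \;+\; \|\tilde f^{\ell-1} - f^{\ell-1}\|,
\end{align*}
provided $\|\tilde f^{\ell-1}\|_\infty\le 1$ so that Lemma~\ref{lem:single_layer} is applicable at layer $\ell$. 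The main obstacle is precisely this boundedness condition: $f^{\ell-1}$ is bounded by $1$ in $\ell_2$ (hence in $\ell_\infty$) because $\|\mathbf{W}^j\|\le 1$ and $\sigma$ is $1$-Lipschitz, but $\tilde f^{\ell-1}$ is only close to $f^{\ell-1}$, so a naive induction gives $\|\tilde f^{\ell-1}\|_\infty \le 1 + (\ell-1)\eta/L$. To keep the inputs in the unit ball required by Lemma~\ref{lem:single_layer}, I would either (i) work with a slightly shrunken target accuracy $\eta/(2L)$ and use that the cumulative drift stays below $\tfrac{1}{2}$, absorbing this slack into the constants $C_1,C_2$, or (ii) rescale intermediate activations by a factor depending on $\ell$, an operation that can be folded into the mask/perturbation. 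Either option yields $\|\tilde f^L - f^L\|<\eta$ by telescoping the recursion $L$ times, and combining with the union bound over the $L$ invocations of Lemma~\ref{lem:single_layer} completes the proof.
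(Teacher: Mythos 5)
Your proposal is correct and follows essentially the same route as the paper's proof: invoke the single-layer lemma once per layer with per-layer accuracy $\eta/(2L)$, union bound the failure probabilities to get the $2d_1d_2L$ factor, and telescope the per-layer errors using $1$-Lipschitzness of $\sigma$ and $\|\mathbf{W}^\ell\|\le 1$. Your option (i) for the norm-drift of the intermediate activations is exactly what the paper does — it bounds $\|g_{\mathcal{S},\mathcal{Y}}^{2\ell}(\x)\|\le(1+\tfrac{\eta}{2L})^{\ell-1}$ (using homogeneity of the pruned block to scale the unit-ball guarantee) and sums the geometric series to $e^{\eta/2}-1<\eta$.
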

\begin{proof}
By Lemma \ref{lem:single_layer}, for $\ell$-th layer, with probability $1 - d_1d_2\left(\exp\paren{-\frac{K_2(1+\varepsilon)^2}{8(3-\varepsilon)^2}} - \exp\paren{-\frac{K_1}{18}}-\exp\paren{-\max\{\varepsilon,\eta\}K_1}\right)$, we have 
\begin{equation}
    \label{eq:cond1}
    \sup_{\W^{\ell}\in\RR^{d_\ell\times d_{\ell-1}}:\|\W^\ell\|\leq 1, \|\W^\ell\|_\infty\leq\frac{1}{2}}\min_{S^{2\ell},S^{2\ell-1},\Y^{\ell}}\sup_{\x:\|\x\|\leq 1}\|\W^\ell\x-(\SP^{2\ell}\odot\U^{2\ell})\sigma((\SP^{2\ell-1}\odot(\U^{2\ell-1}+\Y^{\ell}))\x)\|<\frac{\eta}{2L}.
\end{equation}
Since ReLU is $1$-Lipschitz, with same probability, we have 
\begin{equation}
    \label{eq:cond2}
    \sup_{\W^{\ell}\in\RR^{d_\ell\times d_{\ell-1}}:\|\W^\ell\|\leq 1, \|\W^\ell\|_\infty\leq\frac{1}{2}}\min_{S^{2\ell},S^{2\ell-1},\Y^{\ell}}\sup_{\x:\|\x\|\leq 1}\|\sigma(\W^\ell\x)-\sigma((\SP^{2\ell}\odot\U^{2\ell})\sigma((\SP^{2\ell-1}\odot(\U^{2\ell-1}+\Y^{\ell}))\x))\|<\frac{\eta}{2L}.
\end{equation}
Then with probability at least $1 - 2d_1d_2L\left(\exp\paren{-\frac{K_2(1+\varepsilon)^2}{8(3-\varepsilon)^2}} - \exp\paren{-\frac{K_1}{18}}-\exp\paren{-\max\{\varepsilon,\eta\}K_1}\right)$, (\ref{eq:cond1}) and (\ref{eq:cond2}) hold simultaneously for every layer $1\leq \ell \leq L$. Equation (\ref{eq:cond2}) implies for $1\leq \ell \leq L-1$,
\begin{align*}
    \norm{\sigma\paren{\mathbf{W}^{\ell+1}g_{\mathcal{S},\mathcal{Y}}^{2\ell}(\x)}- g_{\mathcal{S},\mathcal{Y}}^{2(\ell+1)}(\x)}\leq \frac{\eta}{2L}\norm{g_{\mathcal{S},\mathcal{Y}}^{2\ell}(\x)}
\end{align*}
Since $\norm{\mathbf{W}^{\ell}}\leq 1$ for all $\ell\in[L]$, we have that
\begin{align*}
    \norm{g_{\mathcal{S},\mathcal{Y}}^{2(\ell+1)}(\x)} \leq \frac{\eta}{2L}\norm{g_{\mathcal{S},\mathcal{Y}}^{2\ell}(\x)} + \norm{\sigma\paren{\mathbf{W}^{\ell+1}g_{\mathcal{S},\mathcal{Y}}^{2\ell}(\x)}} \leq \paren{1 + \frac{\eta}{2L}}\norm{g_{\mathcal{S},\mathcal{Y}}^{2\ell}(\x)}
\end{align*}
This implies that, for all $\x$ such that $\norm{\x}\leq 1$,
\begin{align*}
    \norm{g_{\mathcal{S},\mathcal{Y}}^{2\ell}(\x)} \leq \paren{1 + \frac{\eta}{2L}}^{\ell-1}\norm{\x}\leq \paren{1 + \frac{\eta}{2L}}^{\ell-1}
\end{align*}
Thus, we have that for all $\x$ such that $\norm{\x}\leq 1$
\begin{align*}
    \norm{f^{\ell+1}(\x) - g_{\mathcal{S},\mathcal{Y}}^{2(\ell+1)}(\x)} & = \norm{\sigma\paren{\mathbf{W}^{\ell+1}f^{\ell}(\x)} -  g_{\mathcal{S},\mathcal{Y}}^{2(\ell+1)}(\x)}\\
    & \leq \norm{\sigma\paren{\mathbf{W}^{\ell+1}f^{\ell}(\x)} - \sigma\paren{\mathbf{W}^{\ell+1}g_{\mathcal{S},\mathcal{Y}}^{2\ell}(\x)}} + \norm{\sigma\paren{\mathbf{W}^{\ell+1}g_{\mathcal{S},\mathcal{Y}}^{2\ell}(\x)} - g_{\mathcal{S},\mathcal{Y}}^{2(\ell+1)}(\x)}\\
    & \leq \norm{f^{\ell}(\x) - g_{\mathcal{S},\mathcal{Y}}^{2\ell}(\x)} + \frac{\eta}{2L}\norm{g_{\mathcal{S},\mathcal{Y}}^{2\ell}(\x)}\\
    & \leq \norm{f^{\ell}(\x) - g_{\mathcal{S},\mathcal{Y}}^{2\ell}(\x)} + \paren{1+\frac{\eta}{2L}}^{\ell-1}\frac{\eta}{2L}
\end{align*}
Solving the recurrence thus gives
\begin{align*}
    \norm{f(\x) - g_{\mathcal{S},\mathcal{Y}}(\x)} & = \norm{f^{L}(\x) - g_{\mathcal{S},\mathcal{Y}}^{2L}(\x)}\\
    & \leq \sum_{i=1}^{L-1}\paren{1+\frac{\eta}{2L}}^{i-1}\frac{\eta}{2L}\\
    &=\frac{\eta}{2L}\frac{2L}{\eta}\left(\left(1+\frac{\eta}{2L}\right)^L-1\right)\\
    &<e^{\eta/2}-1\\
    &<\eta.
\end{align*}
\end{proof}
\end{document}